\newcommand{\ra}[1]{\renewcommand{\arraystretch}{#1}}
\renewcommand{\arraystretch}{1.25}
\definecolor{darkcyan}{rgb}{0.0, 0.55, 0.55}
\definecolor{MidnightBlue}{RGB}{25,25,112}
\definecolor{MidnightBlueComplementingGreen}{RGB}{25,112,25}
\definecolor{MidnightBlueComplementingPurple}{RGB}{112,25,112}
\definecolor{MidnightBlueComplementingRed}{RGB}{112,25,69}
\definecolor{WowColor}{rgb}{.75,0,.75}
\definecolor{MildlyAlarming}{rgb}{0.85,0.25,0.1}
\definecolor{SubtleColor}{rgb}{0,0,.50}
\definecolor{antiquefuchsia}{rgb}{0.57, 0.36, 0.51}
\definecolor{fashionfuchsia}{rgb}{0.96, 0.0, 0.63}
\definecolor{jade}{rgb}{0.0, 0.66, 0.42}
\definecolor{caribbeangreen}{rgb}{0.0, 0.8, 0.6}
\definecolor{aquamarine}{rgb}{0.5, 0.8, 0.85}
\definecolor{lightseagreen}{rgb}{0.13, 0.7, 0.67}
\definecolor{darkgreen}{rgb}{0.0, 0.2, 0.13}
\definecolor{darkspringgreen}{rgb}{0.09, 0.45, 0.27}
\definecolor{attentioncolor}{RGB}{152,90,81}
\definecolor{burgred}{RGB}{40,3,22}
\definecolor{AnnieGreen}{RGB}{17,123,92}
\definecolor{Turquoise}{RGB}{64,224,208}
\definecolor{darkjade}{RGB}{0,122,84}
\definecolor{Window1}{RGB}{92,150,31}%
    \definecolor{Window1dark}{RGB}{41,67,13}%
\definecolor{Window2}{RGB}{255,168,28}
    \definecolor{Window2dark}{RGB}{114,75,12}
\definecolor{Window3}{RGB}{255,96,33}
    \definecolor{Window3dark}{RGB}{97,36,12}
\definecolor{InputColor}{RGB}{20,255,177}
    \definecolor{InputColorlight}{RGB}{222,237,229}
\definecolor{RedAlizarin}{rgb}{0.82, 0.1, 0.26}
\newcommand{\mytag}[2]{%
  \text{#1}%
  \@bsphack
  \begingroup
    \@onelevel@sanitize\@currentlabelname
    \edef\@currentlabelname{%
      \expandafter\strip@period\@currentlabelname\relax.\relax\@@@%
    }%
    \protected@write\@auxout{}{%
      \string\newlabel{#2}{%
        {#1}%
        {\thepage}%
        {\@currentlabelname}%
        {\@currentHref}{}%
      }%
    }%
  \endgroup
  \@esphack
}
\NewDocumentCommand{\AK}{mo}{
    \IfValueF{#2}{
    %%%%% NO Margin Note
                        {{
                        %\scriptsize
                            \textcolor{magenta}{ 
                            \textbf{AK:}
                            \textit{{#1}}
                            }
                        }}
        }
    %%%% Margin Note
    \IfValueT{#2}{
                        \marginnote{{\scriptsize
                            \textcolor{magenta}{ 
                            \textbf{AK:}
                            \textit{{#1}}
                            }
                        }}
        }
                    }
\NewDocumentCommand{\TF}{mo}{
    \IfValueF{#2}{
    %%%%% NO Margin Note
                        {{
                        %\scriptsize
                            \textcolor{blue}{ 
                            \textbf{Takashi:}
                            \textit{{#1}}
                            }
                        }}
        }
    %%%% Margin Note
    \IfValueT{#2}{
                        \marginnote{{\scriptsize
                            \textcolor{blue}{ 
                            \textbf{Takashi:}
                            \textit{{#1}}
                            }
                        }}
        }
                    }
\NewDocumentCommand{\ocariz}{mo}{
    \IfValueF{#2}{
    %%%%% NO Margin Note
                        {{
                        %\scriptsize
                            \textcolor{purple}{ 
                            \textbf{Ocariz:}
                            \textit{{#1}}
                            }
                        }}
        }
    %%%% Margin Note
    \IfValueT{#2}{
                        \marginnote{{\scriptsize
                            \textcolor{purple}{ 
                            \textbf{Ocariz:}
                            \textit{{#1}}
                            }
                        }}
        }
                    }
    \tikzstyle{new style 0}=[fill={rgb,255: red,255; green,94; blue,247}, draw=black, shape=circle]
    \tikzstyle{pointy}=[fill=white, draw=black, shape=circle]
    \tikzstyle{pointy}=[->]
\renewcommand{\phi}{\varphi}
\newcommand{\rr}{\mathbb{R}}
\DeclareMathOperator*{\argmin}{arg\,min}
\newcommand{\N}{\mathbb{N}}
\newcommand{\inputdim}{n}
\newcommand{\prototype}{p}
\newcommand{\samplesize}{s}
\newcommand{\hatf}{\hat{f}}
\newcommand{\eqdef}{\ensuremath{\stackrel{\mbox{\upshape\tiny def.}}{=}}}
\NewDocumentCommand{\luca}{mo}{
    \IfValueF{#2}{
    %%%%% NO Margin Note
                        {{\scriptsize
                            \textcolor{green}{ 
                            \textbf{L:}
                            \textit{{#1}}
                            }
                        }}
        }
    %%%% Margin Note
    \IfValueT{#2}{
                        \marginnote{{\scriptsize
                            \textcolor{green}{ 
                            \textbf{L:}
                            \textit{{#1}}
                            }
                        }}
        }
                    }
\NewDocumentCommand{\giulia}{mo}{
    \IfValueF{#2}{
    %%%%% NO Margin Note
                        {{\scriptsize
                            \textcolor{red}{ 
                            \textbf{GL:}
                            \textit{{#1}}
                            }
                        }}
        }
    %%%% Margin Note
    \IfValueT{#2}{
                        \marginnote{{\scriptsize
                            \textcolor{red}{ 
                            \textbf{GL:}
                            \textit{{#1}}
                            }
                        }}
        }
}
\NewDocumentCommand{\anastasis}{mo}{
    \IfValueF{#2}{
    %%%%% NO Margin Note
                        {{\scriptsize
                            \textcolor{violet}{ 
                            \textbf{A:}
                            \textit{{#1}}
                            }
                        }}
        }
    %%%% Margin Note
    \IfValueT{#2}{
                        \marginnote{{\scriptsize
                            \textcolor{violet}{ 
                            \textbf{A:}
                            \textit{{#1}}
                            }
                        }}
        }
                    }
\NewDocumentCommand{\cody}{mo}{
    \IfValueF{#2}{
    %%%%% NO Margin Note
                        {{\scriptsize
                            \textcolor{orange}{ 
                            \textbf{A:}
                            \textit{{#1}}
                            }
                        }}
        }
    %%%% Margin Note
    \IfValueT{#2}{
                        \marginnote{{\scriptsize
                            \textcolor{orange}{ 
                            \textbf{A:}
                            \textit{{#1}}
                            }
                        }}
        }
                    }
\NewDocumentCommand{\Greg}{mo}{
    \IfValueF{#2}{
    %%%%% NO Margin Note
                        {{\scriptsize
                            \textcolor{cyan}{ 
                            \textbf{Y:}
                            \textit{{#1}}
                            }
                        }}
        }
    %%%% Margin Note
    \IfValueT{#2}{
                        \marginnote{{\scriptsize
                            \textcolor{cyan}{ 
                            \textbf{Y:}
                            \textit{{#1}}
                            }
                        }}
        }
                    }
\NewDocumentCommand{\NN}{oo}{
    \ensuremath{
        \mathcal{NN}
        \IfValueT{#1}{_{#1}}\IfValueF{#1}{_{[d]}}
        \IfValueT{#2}{^{#2}}\IfValueF{#2}{^{\sigma}}
    }
}
\theoremstyle{plain}
\newtheorem{theorem}{Theorem}[section]
\newtheorem{proposition}[theorem]{Proposition}
\newtheorem{lemma}[theorem]{Lemma}
\theoremstyle{definition}
\newtheorem{definition}[theorem]{Definition}
\theoremstyle{remark}
\newtheorem{remark}[theorem]{Remark}
\newtheorem{example}{Example}
\begin{document}

\title{Approximation Rates and VC-Dimension Bounds for (P)ReLU MLP Mixture of Experts}

\author{Anastasis Kratsios\footnote{McMaster University and Vector Institute.  Email: \texttt{kratsioa@mcmaster.ca}},
% \author{sfs}
% \\
% McMaster University and Vector Institute\\
% \texttt{kratsioa@mcmaster.ca} \\
% \And
Haitz S\'{a}ez de Oc\'{a}riz Borde\footnote{ University of Oxford.  Email: \texttt{chri6704@ox.ac.uk}},
% University of Oxford\\
% \texttt{chri6704@ox.ac.uk} \\
% \And 
Takashi Furuya\footnote{Shimane University.  Email: \texttt{takashi.furuya0101@gmail.com}},
% Department of Mathematics \\
% Shimane University\\
% \texttt{takashi.furuya0101@gmail.com}\\
% \And    
Marc T. Law\footnote{NVIDIA.  Email: \texttt{marcl@nvidia.com}}
% \\
% NVIDIA \\
% % Canada \\
% \texttt{marcl@nvidia.com}
} 
\maketitle

\begin{abstract}%
Mixture-of-Experts (MoEs) can scale up beyond traditional deep learning models by employing a routing strategy in which each input is processed by a single ``expert'' deep learning model. This strategy allows us to scale up the number of parameters defining the MoE while maintaining sparse activation, i.e., MoEs only load a small number of their total parameters into GPU VRAM for the forward pass depending on the input. 
In this paper, we provide an approximation and learning-theoretic analysis of mixtures of expert MLPs with (P)ReLU activation functions.
We first prove that for every error level $\varepsilon>0$ and every Lipschitz function $f:[0,1]^n\to \mathbb{R}$, one can construct a MoMLP model (a Mixture-of-Experts comprising of (P)ReLU MLPs) which uniformly approximates $f$ to $\varepsilon$ accuracy over $[0,1]^n$, while only requiring networks of $\mathcal{O}(\varepsilon^{-1})$ parameters to be loaded in memory. Additionally, we show that MoMLPs can generalize since the entire MoMLP model has a (finite) VC dimension of $\tilde{O}(L\max\{nL,JW\})$, if there are $L$ experts and each expert has a depth and width of $J$ and $W$, respectively.  
\end{abstract}

\section{Introduction}
\label{s:Intro}

With the advent of large foundation models, the need to scale deep learning models beyond what is feasible to process on a single machine has become exceedingly relevant.  Mixture of Experts~(MoE) models present a solution to this problem via a sparse activation strategy. In MoEs, every input is first \textit{routed} to one amongst a large number of \textit{expert} deep learning models and then processed therewith.  This allows MoEs to scale up, while maintaining a low/constant computational cost during the forward pass since only a subset of the overall model needs to be loaded into GPU video random-access memory (VRAM) for any given input.
This has led to MoEs such as Mixtral \citep{jiang2024mixtral}, Gemini~\citep{Gemini}, and several others, e.g.~\citep{jacobs1991adaptive,jordan1995convergence,meila2000learning,shazeer2017outrageously, guu2020retrieval,lepikhin2021gshard, fedus2022switch, barham2022pathways,majid2024mixture}, to become a viable solution in scaling up large language models~\citep{radford2018improving,brown2020language}. However, the analytical and statistical underpinnings of MoEs in deep learning are comparatively less understood in contrast to empirical investigations.

This paper adds to the theoretical understanding of this subject by studying MoEs whose experts are (small) multilayer perceptrons (MLPs) with (P)ReLU activation function (MoMLPs).  A key feature of MoEs is that they can maintain a small/fixed computational cost during the forward pass, for any given input $x\in [0,1]^n$, even if the overall model complexity may be large.  Our main result (Theorem~\ref{thrm:Main}) analyzes the complexity of MoEs when uniformly approximating an arbitrary Lipschitz (Lebesgue) almost-everywhere continuously differentiable function $f:[0,1]^n\to \mathbb{R}^m$ by an MoMLP with (P)ReLU activation function to any prespecified error $\varepsilon>0$.  We focus on the trade-off between the maximum number of parameters loaded into VRAM by any expert model $\{\hat{f}_l\}_{l=1}^{\ell}$ while predicting from any given input, against the total number of experts required to maintain that constant number of activated parameters in the forward pass.  
% The \textit{active complexity}, that is the required complexity of any expert MLP $\hat{f}_1,\dots,\hat{f}_{\ell}$ in terms of the number of active parameters .  
Summarized in Table~\ref{tab:main_techincal__nodets}, our main result shows that a constant \textit{active complexity} in the forward pass can be maintained amongst all experts, but at the cost of an exponentially large number of locally-specialized experts $\{\hat{f}\}_{l=1}^{\ell}$ and regions of specialization $\{C_l\}_{l=1}^{\ell}$.  Our complexity estimates are approximately optimal as they nearly match the Vapnik-Chervonenkis (VC) lower bounds derived in~\cite{shen2021deep}.  That is, the uniform approximation of an arbitrary such $f$ on $[0,1]^\inputdim$, with an an error of $\varepsilon>0$, requires at least $\Omega(\varepsilon^{-\inputdim/2})$ \textit{total model parameters}~\citep{yarotsky2018optimal,kratsios2022universal,shen2021deep,shen2022optimal}.  It is here where MoEs have an advantage since not all of these parameters need to be loaded into active memory for any given input; thus MoEs are genuinely sparsely activated. 

From the statistical learning perspective, a key property of MoEs (e.g.\ the top MoMLP model) is that they can maintain a given level of activation in the forward pass while the entire MoE model can maintain a finite VC-dimension (Theorem~\ref{thrm:Bounded_VC__NeuralPathways}).  This is key, for instance, in classification applications, as the fundamental theorem of PAC learning (see e.g.~\citep[Theorem 6.7]{shalev2014understanding} or the results of~\citet{blumer1989learnability,Hanneke_2016_JMLR__CharacterizationLearnability,brukhim2022characterization}) implies that such a machine learning model generalizes beyond the training data if and only if it has finite VC dimension.

\paragraph{Summary of Contributions}

Table~\ref{tab:main_techincal__nodets} summarizes our main contributions, both to the approximation theory and learning theory of MoE models, in the context of the toy mixture of (P)ReLU MLP models.  All results illustrate the trade-off between individual (expert) complexity and the complexity shared across the set of experts when uniformly approximating a target $\alpha$-H\"{o}lder function, where $0<\alpha \le 1$.  
Note that $\alpha=1$ is equivalent to the target function being differentiable (Lebesgue) almost-everywhere.

Our \textit{approximation} theoretic result (Theorem~\ref{thrm:Main}) records the number of parameters required to perform a uniform approximation on a high-dimensional Euclidean space on $[0,1]^n$.  Our \textit{statistical learning} theoretic result (Theorem~\ref{thrm:Bounded_VC__NeuralPathways}) yields a bound on the VC dimension of the entire class of MoEs with just enough approximation power to perform this approximation.  

\begin{table}[H]%[ht]
    \centering
    %%%%%%%---------------------------%%%%%%%%
    \caption{\textbf{Parametric Complexity and VC-Dimension} of MoMLP model with \textit{no. experts-to-expert complexity parameter} $r\in \mathbb{R}$; performing an $\varepsilon>0$ approximation of an $\alpha$-H\"{o}lder function $f:[0,1]^n \to \mathbb{R}$; for $n\in\mathbb{N}$.\\
    When $r\geq 0$ more model complexity is distributed across many ``small experts''.  When $r<0$ fewer experts define the MoE and, as a result, each expert MLP must depend on more parameters such that the entire MoE obtain an accurate approximation of the target function.  }
    \label{tab:main_techincal__nodets}
    %%%%%%%---------------------------%%%%%%%%
    \ra{1.3}
    % \begin{adjustbox}{width=\columnwidth,center}
    \begin{tabular}{@{}ll@{}}
    \cmidrule[0.3ex](){1-2}
    \textbf{Parameter} & \textbf{Estimate} \\
    \midrule
    No.\ Expert Parameters &
    $\mathcal{O}(\max\{1,\varepsilon^{-r}\})$
    \\
    No.\ Experts
    &
    $
    \mathcal{O}\big(
        \max\{1,\varepsilon^{2r/n-1/\alpha}\}
    \big)
    $
    \\
    Routing Complexity
    &
    $
    \mathcal{O}\big(
        \max\{1,(\frac{2r}{n}-\frac{1}{\alpha})\log(\varepsilon)\}
    \big)
    $
    \\
    VC Dimension MoE & 
    $
        \tilde{\mathcal{O}}\big(
            % L\log(L)^2
            \max\{1,\varepsilon^{2r/n-1/\alpha}\}
        \,
        \max\{
                % nL\log(L)
                \varepsilon^{2r/n-1/\alpha}
            ,
                %% VC (order) dimension MLP
                % JW^2\log(JW)      
                \varepsilon^{-r}
        \}
    \big)
    $
    \\
    \arrayrulecolor{black}
    \bottomrule
    \end{tabular}
    % \end{adjustbox}
\end{table}
Observe that, setting $r=\frac{2}{n}$ in Table~\ref{tab:main_techincal__nodets}, yields for ReLU MLPs derived in~\citet{yarotsky2017error}; the optimality of which is expressed in terms of VC dimension in \citet{shen2022optimal}.  Likewise, the VC dimension of the MoMLP is roughly equal to that of ReLU MLPs computed in~\citet{bartlett2019nearly}.

\section{Related Work}
\label{s:Related_Research}
\textbf{Deep Learning Models with Few Parameters in Active Memory.}
Deep learning models with highly oscillatory ``super-expressive'' activation functions~\citep{yarotsky2020phase,yarotsky2021elementary,zhang2022deep} are known to achieve dimension-free approximation rates, thus effectively require a (relatively) feasible number of parameters to be loaded into VRAM during the forward pass.  As we will see in Proposition~\ref{prop:Unbounded_VC__SuperExpressive}, many of these models have an infinite VC dimension even when they are restricted to having a bounded depth and width; see~\citet[Lemma 3.1]{JiaoEtal_SIAMANalysisSINEXPBreakCOD_2023} for $\operatorname{ReLU-Sin-2^x}$-networks.  Their unbounded VC dimension implies that the classifiers implemented by these models
% , by post-composition with $I_{(0,\infty)}$, 
do not generalize on classification problems.   Thus, the real performance of these models does not need to achieve the approximation-theoretic optima since they can only learn from a finite number of noisy training instances. Alternatively, a feasible number of parameters in deep learning models with standard activation functions may be guaranteed by restricting classes of well-behaved target functions such as Barron functions~\citep{barron1993universal}, functions of mixed-smoothness \citep{suzuki2018adaptivity}, highly smooth functions~\citep{mhaskar1996neural,galimberti2022designing,gonon2023approximation,opschoor2022exponential}, convex functions~\citep{bach2017breaking}, functions with compositional structure~\citep{mhaskar2017and}, or other restricted classes. However, there are generally no verifiable guarantees that a target function encountered in practice has the necessary structure for these desired approximation theorems to hold.

\textbf{Universal Approximation in Deep Learning.} Several results have recently considered the expression power of deep learning models.  These include universal approximation guarantees for MLPs \citep{cybenko1989approximation,hornik1989multilayer,lu2017expressive,suzuki2018adaptivity,yarotsky2017error,yarotsky2018optimal,voigtlaender2019approximation,bolcskei2019optimal,guhring2020error,de2021approximation,devore2021neural,daubechies2022nonlinear,kratsios2022relu,zhang2022deep,opschoor2022exponential,zamanlooy2022learning,shen2022optimal,cuchiero2023global,voigtlaender2023universal,benth2023neural,mao2023rates,yang2024optimal}, CNNs~\citep{petersen2020equivalence,yarotsky2022universal}, spiking neural networks~\citep{MartinaSpiking2024}, residual neural networks~\citep{tabuada2021universal}, transformers~\citep{yun2019transformers,yun2020n,kratsios2022universal,fang2023attention}, random neural networks~\citep{gonon2023approximation}, recurrent neural network models~\citep{grigoryeva2018universal,gonon2021fading,hutter2022metric,galimberti2022designing,hoon2023minimal}, and several others. 
In each these cases, one typically considers the expressivity of a single ``expert'' model and not a mixture thereof.  Our analysis can be customized to any of these settings to yield analogues of Theorem~\ref{thrm:Main}.  

\textbf{Foundations of MoEs.}
MoE models have been heavily studied since their inception.  Most results have focused on identifying the correct expert to best route any given input to \citep{teicher1960mixture,teicher1963identifiability,wang1996mixed}, the construction of effective routing mechanisms~\citep{2017arXiv171109485W}
selection \citep{wang1996mixed}, MoE training~\citep{larochelle2009exploring,akbari2024alternating}, statistical convergence guarantees for classes of MoEs~\citep{chen1995optimal,ho2022convergence}, robustness guarantees for such models~\citep{puigcerver2022adversarial}, amongst several other types of guarantees.  However, to our knowledge, there are no available approximation guarantees for MoE or VC-dimension bounds for deep-learning-based MoEs.  Thus, our results would be adding to the approximation theoretic foundations of MoE models as well as to the statistical foundations of deep-learning-based MoEs.

\textbf{Prototypes and Partitioning.} Each region in our learned partition of the input space is associated with a \emph{representative point} therein called a \emph{prototype}. Prototypes (also called \emph{landmarks}) are routinely used in image classification~\citep{mensink2012metric}, few-shot learning~\citep{snell2017prototypical,Cao2020A}, dimensionality reduction~\citep{law2018dimensionality}, in complex networks~\citep{keller2023strain}, and geometric deep learning~\citep{ghadimi2021hyperbolic} to tractably encode massive structures. They are also standard in classical clustering algorithms such as $K$-medoids or $K$-means, wherein the part associated with each medoid (resp.\ centroid) defines a Voronoi cell or Voronoi region~\citep{Voronoi1908198287}. Moreover, while partitioning is commonly employed in deep learning for various purposes, such as proving universal approximation theorems~\citep{yarotsky2017error,LuShenZuoweiHaizhao_2021_DNNApprx_SmoothFunctions,guhring2021approximation} or facilitating clustering-based learning~\citep{zamanlooy2022learning,pmlr-v190-trask22a,ali_CA_2023_TensorizedTreeNNs_ApproximationTheory,srivastava2022expertnet}, existing approaches typically involve loading the entire model into VRAM. Our approach, however, differs by relying on a learned partition of the input space, where each part is associated with a distinct small neural network. Importantly, the complete set of networks forming the MoMLPs does not need to be simultaneously loaded into VRAM during training or inference.

\paragraph{Paper Overview.} Our paper is organized as follows.  Section~\ref{s:Prelim} contains preliminary notation, definitions, and mathematical background required for the formulation of our main results.  Section~\ref{s:Main_Result} contains our main approximation (Theorem~\ref{thrm:Main}) and learning theoretic (Theorem~\ref{thrm:Bounded_VC__NeuralPathways}) guarantees.  
Section~\ref{s:Proof} dives into the details of why \textit{MoEs can achieve arbitrary precision while maintaining a feasible active computational complexity} by explaining the derivation of our main approximation theorem; the details of which are relegated to Appendix~\ref{a:Proof}.  A technical version (Theorem~\ref{thm:Main__TechnicalVersion}) of our main approximation guarantee is then presented, which allows for the approximation of continuous functions of arbitrarily low regularity and for the organization of the experts defining the MoMLP into more general tree structures.
Appendix~\ref{a:Experiments} contains technical derivations of our main results as well as experimental elucidation of the benefit of MoEs, and specifically the toy MoMLP model.

\section{Preliminaries}
\label{s:Prelim}
We standardize our notation, define the necessary mathematical formalisms to state our main results and define our toy MoE Model.

\subsection{Notation.} 
% \label{s:Prelim__ss:GeneralNotation}
We use the following notation: for any $f,g:\mathbb{R}\to \mathbb{R}$, we write $f\in \mathcal{O}(g)$ if there exist $x_0\in \mathbb{R}$ and $M\ge0$ such that for each $x\ge x_0$ we have $|f(x)|\le M g(x_0)$. 
Similarly, we write  $f\in \Omega(g)$ to denote the relation $g \in O(f)$.
The ReLU \textit{activation function} is given for every $x\in \mathbb{R}$ by $\text{ReLU}(x) = \max\{x,0\}$. 
For each $n\in \mathbb{N}_+$ and $C\subseteq \mathbb{R}^n$, the \textit{indicator function} $I_C$ of $C$ is defined by: for each $x\in \mathbb{R}^n$ set $I_C(x)=1$ if $x\in C$ and is $0$ otherwise.

\subsection{Background}
\paragraph{Multi-Layer Perceptrons}
% \label{s:Prelim__ss:MLPs}
We will consider MLPs with trainable PReLU activation functions.  
\begin{definition}[Trainable PReLU]
\label{defn:PReLU}
We define the trainable PReLU activation function $\sigma:\rr\times \rr\to \rr$ for each input $x\in \rr$ and each parameter $\alpha\in \rr$ as follows: 
\[
    \sigma_{\alpha} (x) 
    \eqdef \sigma (x,\alpha) 
    \eqdef 
    \begin{cases}
        x ~~~~~~~~~\text{ if } x\ge 0,\\
        \alpha x ~~~~~~\text{ otherwise.}
    \end{cases}
    % \max\{x,\alpha \, x\}
\]
\end{definition}
PReLU generalizes ReLU since $\text{ReLU}(x) = \sigma_{0} (x)$, and it makes the hyperparameter $\alpha$ of a Leaky ReLU learnable. 
We will often be applying our trainable activation functions component-wise. For positive integers $n,m$, we denote the set of $n\times m$ matrices by $\rr^{n\times m}$.  
More precisely, we mean the following operation defined for any $N\in \N$, $\bar{\alpha} \in \rr^{N}$ with $i^{th}$ entry denoted as $\bar{\alpha}_i$, and $x\in \rr^{N}$, by
\[
\sigma_{\bar{\alpha}}\bullet x
\eqdef
\left(
\sigma_{\bar{\alpha}_i}(x_i)
\right)_{i=1}^N
.
\]
We now define the class of multilayer perceptions (MLPs), with trainable activation functions. Fix $J\in\N$ and a multi-index $[d]\eqdef (d_0,\dots,d_{J+1})$, and let 
$P([d])=\sum_{j=0}^{J%-1
}d_{j}(d_{j+1}+2) 
% +d_J
$.
We identify any vector $\theta \in \rr^{P([d])}$ with
\begin{equation}
\begin{aligned}
         \theta & \leftrightarrow
% \left(
    \big(
        A^{(j)},b^{(j)},\bar{\alpha}^{(j)}
    \big)_{j=0}^J%^{J-1}
\,\,\mbox{ and }\,\,
(A^{(j)},b^{(j)},\bar{\alpha}^{(j)}) & \in 
\rr^{d_{j+1}\times d_{j}}\times \rr^{d_{j}} \times \rr^{d_{j}}.
% ,\,
% c \in \rr^{d_J}
\end{aligned}
\end{equation}
We recursively define the representation function of a $[d]$-dimensional network by
\begin{equation}
    \begin{aligned}
    % Function
   \rr^{P([d])}\times \rr^{d_0} \ni (\theta,x)
    & \mapsto 
    \hatf_{\theta}(x) 
        \eqdef 
    A^{(J)}
    \,
    x^{(J)}
   +
   b^{(J)}
   ,
    \\
    %% Loop
    x^{(j+1)} &\eqdef 
    \sigma_{\bar{\alpha}^{(j)}}\bullet(
        A^{(j)} 
        \,
        x^{(j)}
            +
        b^{(j)})
    %
    %% Init
    \qquad \mbox{ for }j=0,\dots,J-1
    \\
    x^{(0)} &\eqdef x.
    \end{aligned}
    \label{eq_definition_ffNNrepresentation_function}
\end{equation}
% where $j$ runs from $0$ to $J-1$.
We denote by $\NN[[d]]$ the family of $[d]$-dimensional \textit{multilayer perceptrons} (MLPs), $\{\hatf_{\theta}\}_{\theta \in \rr^{P([d])}}$ described by \eqref{eq_definition_ffNNrepresentation_function}.  
The subset of $\NN[[d]]$ consisting of networks $\hatf_{\theta}$ with each $\bar\alpha^{(j)}_i=(1,0)$  in Eq.~\eqref{eq_definition_ffNNrepresentation_function} is denoted by $\NN[[d]][\operatorname{ReLU}]$ and consists of the familiar deep ReLU MLPs. 
The set of ReLU MLPs with \textit{depth} $J$ and \textit{width} $W$ is denoted by $\NN[J,W:n,m]=\cup_{[d]}\, \NN[[d]]$, where the union is taken over all multi-indices $[d]=[d_0,\dots,d_{\tilde{J}}]$ with $n=d_0$, $m=d_{J+1}$, $d_0,\dots,d_{J+1}\le W$, and $\tilde{J}\le J$.

\paragraph{VC dimension} 
Let $\mathcal{F}$ be a set of functions from a subset $\mathcal{X}\subseteq \mathbb{R}^n$ to $\{0,1\}$; i.e.\ binary classifiers on $\mathcal{X}$.  The set $\mathcal{F}$ shatters (in the classical sense) a $k$-point subset $\{x_i\}_{i=1}^k\subseteq \mathcal{X}$ if $\mathcal{F}$ can represent every possible set of labels on those $k$-points; i.e.\ if $\#\,\{  (\hatf(x_i))_{i=1}^k \in \{0,1\}^k:\, \hatf\in \mathcal{F}\}=2^k$.  

As in~\citet{shen2022optimal}, we extend the definition of shattering from binary classifiers to real-valued functions as follows.  Let $\mathcal{F}$ be a set of functions from $[0,1]^n$ to $\mathbb{R}$.  The set $\mathcal{F}$ is said to shatter a $k$-point set $\{x_i\}_{i=1}^k\subseteq \mathcal{X}$ if
\begin{equation}
\label{eq:real_valued_shattering}
    \{ I_{(0,\infty)} \circ f:\,f\in \mathcal{F}\}
\end{equation}
shatters it, i.e.\ if all possible classifiers on $\{x_i\}_{i=1}^k$ are implementable in the sense that $\{ I_{(0,\infty)} \circ f:\,f\in \mathcal{F}\}=\{0,1\}^{\{x_i\}_{i=1}^k}$; here $I_{(0,\infty)}(t)=1$ if $t> 0$ and equals to $0$ otherwise. 
%%% VC Dkmension
Denoted by $\operatorname{VC}(\mathcal{F})$, the \textit{VC dimension} of $\mathcal{F}$ is the cardinality of the largest $k$-point subset shattered by $\mathcal{F}$.  If $k$ is unbounded, then we say that $\mathcal{F}$ has an infinite VC dimension (over $\mathcal{X}$).  
One can show, see \citet{bartlett2019nearly}, that the VC-dimension of any such $\mathcal{F}$ is roughly the same as the pseudo-dimension of \citet{PollardEmpiricalProcesses_Book_1990} for a small modification of $\mathcal{F}$.
% \ocariz{What does small modification mean in this context}.

VC dimension measures the richness of a class of functions.  For example, in~\citet[Theorem 1]{harvey2017nearly}, the authors showed that the set of MLPs with $\operatorname{ReLU}$ activation function with $L\in \mathbb{N}_+$ layers, width and $W\in \mathbb{N}_+$ satisfying $W>O(L)>C^2$, where $C\ge 640$, satisfies
\begin{equation}
\label{eq:VCDim_MLPs}
        \operatorname{VC}(\NN[W,L])
    \in 
        \Omega\Big(
            WL\,\log_2(W/L)
        \Big)
.
\end{equation}
Nearly matching upper bounds are in \citet{bartlett2019nearly}.

We now define our toy mixture of experts model, the MoMLP.
\subsection{Definition: Our Toy Mixture of Experts Model - the MoMLP}

\begin{wrapfigure}{r}{0.40\textwidth} \vspace{-10pt}
\centering
\centerline{\includegraphics[width=0.96\linewidth]{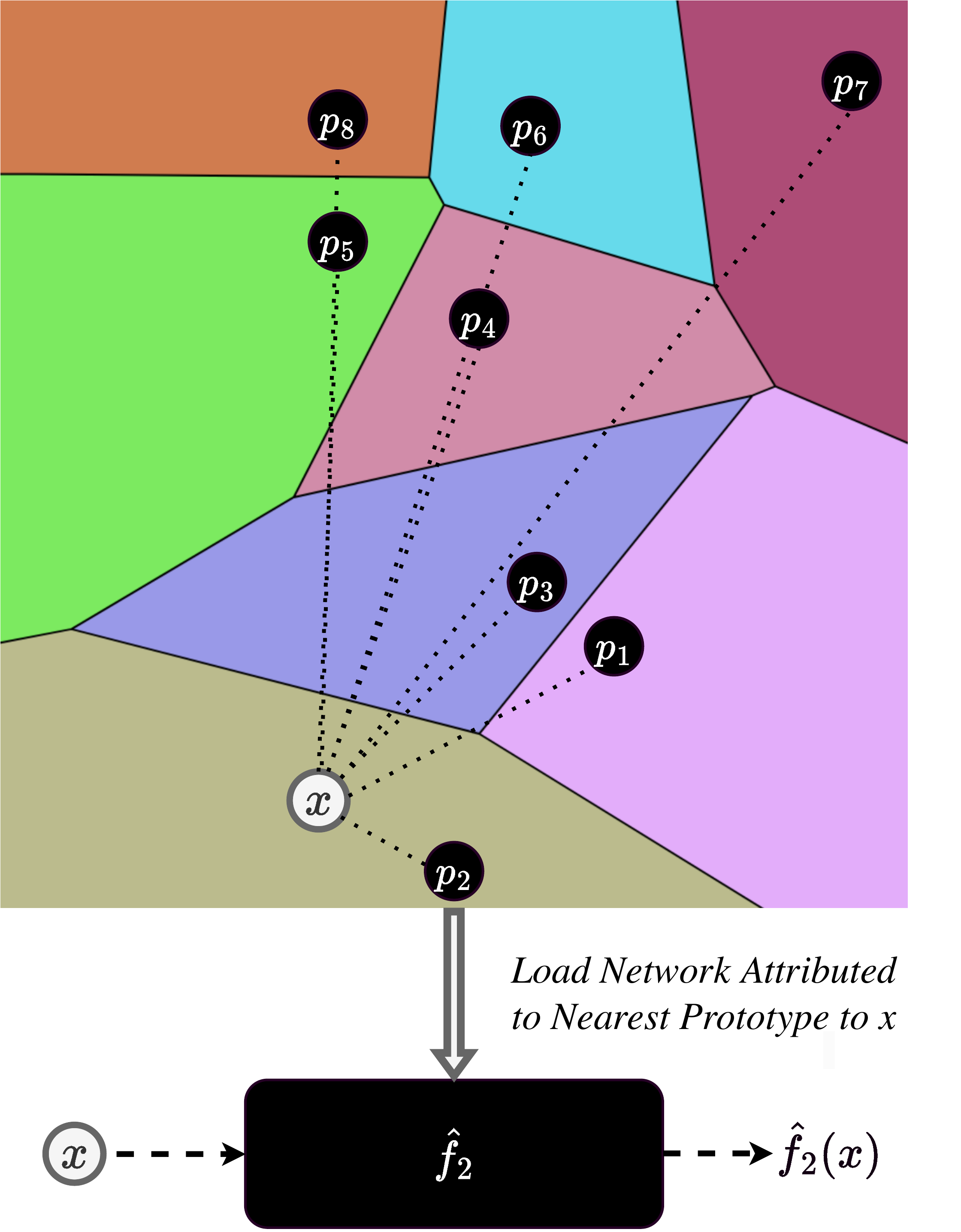}}\caption{%\textbf{Cartoon of MoMLP model:} 
    $1$)~The distance from each input $x$ to all prototypes $\prototype_1,\dots,\prototype_8$ ($\ell=8$) is queried.~%, iteratively by searching through a hierarchy of cells organized as a tree. 
    $2$)~The network ($\hatf_2$ in the figure) assigned to the nearest prototype ($\prototype_2$), is loaded onto the GPU and used for prediction.}
    \label{fig:neural_pathway__workflow} \vspace{-20pt}
\end{wrapfigure}

We study the following toy MoE model, where each expert (P)ReLU MLP specializes in a distinct region of the input domain $[0,1]^n$. 
Informally, these regions $C_1,\dots,C_{\ell}$ correspond to the sets of closest points (Voronoi cells) from a finite set of prototypes/landmarks $p_1,\dots,p_{\ell}$ in $[0,1]^n$, as illustrated in Figure~\ref{fig:neural_pathway__workflow}.  Associated to each region $C_i$ is a single expert MLP $\hat{f}_i$ with (P)ReLU activation function responsible for approximating the target function \textit{only thereon}.  
Here, the sparse gating procedure which \textit{routes} any given input $x\in [0,1]^n$ to the expert corresponding to the nearest prototype $p_i$ is implemented by a (finite) \textit{routing tree} $\mathcal{T}=(V,E)$ whose nodes $V$ are points in $[0,1]^n$ and leaves (terminal nodes) are the points $p_1,\dots,p_{\ell}$. %\ML{This paragraph will be confusing since concepts are not well introduced.}

We now formally define the classes of MoMLPs.   
\begin{definition}[MoMLPs]
\label{defn:NeuralPathways}
Let $J,W,{l},n \in \mathbb{N}$ and fix an activation function $\sigma\in C(\mathbb{R})$.  The set of MoMLPs with at-most $L$ leaves, depth $J$, and width $W$, denoted by $\mathcal{NP}_{J,W,{l}:n,m}^{\sigma}$, consists of all functions $\hatf:\mathbb{R}^n\to\mathbb{R}^m$ satisfying $\hatf = \sum_{i=1}^l\, f_i\,I_{C_i}$ 
where for $f_1,\dots,f_l\in \mathcal{NN}^{\operatorname{PReLU}}_{J,W}$, and \textit{distinct} prototypes $\prototype_1,\dots,\prototype_{{l}}\in \mathbb{R}^\inputdim$; whose induced Voronoi cells $\{C_i\}_{i=1}^{{l}}$ are defined recursively by
\begin{equation}
\label{eq:paritioning}
\begin{aligned}
    C_i & \eqdef \tilde{C}_i \setminus \bigcup_{j<i}\, \tilde{C_j},\\ 
\end{aligned}
\end{equation}
where for $i=1,\dots,l$ the (non-disjoint) cells are
%\[
%\begin{aligned}
\begin{equation}
    \tilde{C}_i \eqdef \big\{
        x\in [0,1]^n:\, \|x-p_i\|=\min_{j\in \{ 1,\dots,l \} }\, \|x-p_j\|
    \big\}.     
\end{equation}
%\end{aligned}
%\]
\end{definition}

\paragraph{{The Routing Trees}}

The structure in any MoMLP is any tree with root node $\mathbb{R}^n$ and leaves given by the pairs $\{(p_i,f_i)\}_{i=1}^{l}$ or equivalently $\{(C_i,f_i)\}_{i=1}^{l}$.  The purpose of any such tree is simply to efficiently route an input $x\in \mathbb{R}^n$ to one of the $l$ ``leaf networks'' (the experts) $f_1,\dots,f_{l}$ using $\mathcal{O}(l\log(l))$ queries; to identify which Voronoi cells $\{C_i\}_{i=1}^{l}$ the point $x$ is contained in. %\ML{Is there a typo? Is it $l$  instead of $\ell$?}

\begin{remark}[{Partitioning in~\eqref{eq:paritioning} in classical computer science}]
The partitioning technique used to define Eq.~\eqref{eq:paritioning} is standard, see e.g.~\citet[Proof of Lemma 1.7; page 846]{KrauthgamerLeeMendelNaor_2005_GAFA__MEasuredDescentEmbedding}.  It is employed to ensure disjointness of the Voronoi cells; this guarantees that no input is assigned to multiple prototypes.  
To keep notation tidy, we use $\mathcal{NP}_{J,W,L}^{\sigma}$ (resp.\ $\mathcal{N}^{\sigma}_{W,L}$), to abbreviate $\mathcal{NP}_{J,W,L:n,m}^{\sigma}$ (resp.\ $\mathcal{N}^{\sigma}_{W,L:n,m}$) whenever $n$ and $m$ are clear from the context.
\end{remark}

\section{Main Results}
\label{s:Main_Result}
We first present our main approximation theoretic guarantee, which gives complexity estimates for mixtures of MLPs with trainable PReLU activation functions when uniformly approximating arbitrary locally-H\"{o}lder function on the closed unit ball of $\rr^n$, defined by $\overline{B}_n(0,1)\eqdef \{x\in \rr^n:\,\|x\|\le 1\}$.

Our rates depend on a ``number of experts-to-expert complexity trade-off parameter'' $r\in \mathbb{R}$ which determines how fast the overall MoE complexity scales, in terms of the number of experts and the complexity of each expert, as the approximation error becomes small.
Setting $r<0$ implies that more model complexity will be loaded into each expert MLP and there will be fewer experts defining the MoE.  In contrast, setting $r>0$ loads less complexity in each expert MLP at the cost of more experts in the MoE.  In particular, when $r=0$, each expert will have constant complexity even when the approximation error becomes arbitrarily small.

\begin{theorem}[{Trade-Off: No.\ Expert vs.\ Expert Complexity}]
\label{thrm:Main}
Suppose that $\sigma$ satisfies Definition~\ref{defn:PReLU}.
Fix an ``number of experts-to-expert complexity trade-off parameter'' $r\in \mathbb{R}$.  
\hfill\\
For every $\alpha$-H\"{o}lder map $f: \overline{B}_n(0,1) \to \rr^m$ with $0<\alpha\le 1$ and each approximation error $\varepsilon>0$, there is a $p\in \mathbb{N}_+$, a binary tree $\mathcal{T}\eqdef(V,E)$ with leaves $\mathcal{L}\eqdef \{(v_i,\theta_i)\}_{i=1}^L \subseteq  {\mathcal{K}} \times\rr^p$ and a family of MLPs with (P)ReLU activation function $\{\hat{f}_{\theta_i}\}_{i=1}^L$ defined by $p$ parameters and mapping $\mathbb{R}^n$ to $\mathbb{R}^m$ satisfying:
\begin{equation*}
% \label{eq:packing_condition}
    \max_{x\in  {\mathcal{K}} }\,
        \min_{(v_i,\theta_i)\in \mathcal{L}}\,
            \|
                    x
                -
                    v_i
            \|
        \in 
            \Theta\big(
                \varepsilon^{1/\alpha - 2r/n}
            \big)
\end{equation*}
and for each $x\in  {\mathcal{K}} $ and $i=1,\dots,L$, if $\|x-v_i\|<\delta$ then
\[
    \|
            f(x)
        -
            f_{\theta_i}(x)
    \|
    <
        \varepsilon
    .
\]
The depth and width of each $\hatf_{\theta_i}$ and the number of leaves, height, and number of nodes required to build the $\nu$-ary tree are all recorded in Table~\ref{tab:main_techincal__nodets}.
\end{theorem}

Next, we demonstrate that the MoMLP model can generalize and generate functions that are PAC-learnable, thanks to its finite VC dimension. This property, however, breaks down in MLP models with super-expressive activation functions.  
\begin{theorem}[VC-Dimension Bounds for MoMLPs - MoMLPs Can Generalize]
\label{thrm:Bounded_VC__NeuralPathways}
Let $J,W,L,n\in \mathbb{N}_+$. Then $\operatorname{VC}\big(\mathcal{NN}_{J,W,L:n,1}^{\operatorname{PReLU}}\big)$ is of
\begin{equation}
        \mathcal{O}\big(
            L\log(L)^2
            \,
            \max\{
                    nL\log(L)
                ,
                    %% VC (order) dimension MLP
                    JW^2\log(JW)      
            \}
        \big)    
\end{equation}
In particular, $\mathrm{VC}(\mathcal{NP}_{J,W,L:n,1}^{ReLU})< \infty$.
\end{theorem}

\subsection{Discussion}
\label{s:Discussion}

\textbf{Trade-off between Number of Experts and Expert Complexity.} Our results suggest that, theoretically, successful MoE models may not need each expert to be highly overparameterized if there are enough experts.  This hypothesis is ablated experimentally in Appendix~\ref{a:Experiments} in the context of irregular function approximation in low-dimension space; which is equivalent to high-dimensional regular function approximation (see Appendix~\ref{a:COI} for a discussion on this later point).

\textbf{Pruning.} Additionally, one might consider the option of pruning a sizable model, conceivably trained on a GPU with a larger VRAM, for utilization on a smaller GPU during inference as an alternative to our method. Nevertheless, in frameworks like PyTorch, pruning does not result in a reduction of operations, acceleration, or diminished VRAM memory usage. Instead, pruning only masks the original model weights with zeros. The reduction in model size occurs only when saved in offline memory in sparse mode, which, in any case, is not a significant concern.

\textbf{Logarithmic number of queries via trees.} 
%\label{s:Discussion__ss:LogQueieries}
For many prototypes, as in our main guarantee, the MoMLPs only need to evaluate the distance between the given input and a logarithmic number of prototypes—specifically, one for each level in the tree—when using deep binary trees to hierarchically refine the Voronoi cells. Thus, a given machine never processes the exponential number of prototypes, and only $\nu\lceil \log_{\nu}(K)\rceil$ prototypes are ever queried for any given input; when trees are $\nu$-ary (as in Theorem~\ref{thm:Main__TechnicalVersion}), and where $K$ denotes the number of prototypes. %Similarly, using for , only needs to execute $\nu\lceil \log_{\nu}(K)\rceil$ distance-based queries.  
Since we consider that prototypes are queried separately and before loading MoMLPs, we do not take them into account when counting the number of learnable parameters.
Moreover, the size of our prototypes is negligible in our experiments.

\subsection{Application: Controlling The Complexity in VRAM maintaining a Finite VC Dimension}

\textbf{Super-Expressive Activation Functions Have Infinite VC-Dimension.} We complement the main result of \citet{bartlett2019nearly} by demonstrating that the class of unstable MLPs \citep{shen2022deep_JMLR} possesses infinite VC dimension, even when they have finite depth and width. Thus, while they may serve as a gold standard from the perspective of approximation theory, they should not be considered a benchmark gold standard from the viewpoint of learning theory.

We consider a mild extension of the super-expressive activation function of \citet{shen2022deep_JMLR}.  This parametric extension allows it to implement the identity map on the real line as well as the original super-expressive activation function thereof.
\begin{definition}[{Trainable Super-Expressive Activation Function}]
\label{defn_TrainableActivation_SuperExpressive}
A trainable action function $\sigma:\rr\times \rr\to \rr$ is of super-expressive type if for all $\alpha\in \rr$
\[
    \sigma_{\alpha}:
\rr
    \ni x
\mapsto 
        \alpha x 
        +
        (1-\alpha)
        \sigma^{\star}(x)
\]
where $\sigma^{\star}:\rr\to\rr$ is given by
\begin{equation}
\label{eq:activation_superexpressive__nontrainable}
        \sigma^{\star}(x)
    \eqdef 
        \begin{cases}
            |x \mod{2}| & \mbox{ for } 0\le x\\
            \frac{x}{|x|+1} & \mbox{ for } x < 0
    .
        \end{cases}
\end{equation}
\end{definition}

\begin{proposition}[MLPs with Super-Expressive Activation Do Not Generalize]
\label{prop:Unbounded_VC__SuperExpressive}
Let $\mathcal{F}$ denote the set of MLPs with activation function in Definition~\ref{defn_TrainableActivation_SuperExpressive}, depth at-most $15$, and width at-most $36n(2n+2)$.  Then $\operatorname{VC}(\mathcal{F})=\infty$.
\end{proposition}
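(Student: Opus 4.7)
The plan is to exploit the defining feature of the super-expressive activations in Definition~\ref{defn_TrainableActivation_SuperExpressive}: at the prescribed depth ($\leq 15$) and width ($\leq 36n(2n+2)$), the class $\mathcal{F}$ is already a universal approximator on the cube $[0,1]^n$, meaning every continuous target can be uniformly approximated by elements of $\mathcal{F}$ to arbitrary precision. Once this capability is in hand, realizing an arbitrary binary labeling on a finite set becomes a matter of sign-stability, and unbounded VC dimension follows by letting the set size grow.

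To implement the plan, I would fix an arbitrary positive integer $k$ and select $k$ distinct points $x_1,\dots,x_k \in [0,1]^n$ (any generic choice works; e.g.\ $x_i = (i/(k+1),0,\dots,0)$). For each target labeling $(y_1,\dots,y_k)\in\{0,1\}^k$, I would construct a continuous interpolant $g_y:[0,1]^n \to \mathbb{R}$ with $g_y(x_i)=+1$ when $y_i=1$ and $g_y(x_i)=-1$ when $y_i=0$. Existence follows from the Tietze extension theorem or, more explicitly, by summing disjointly supported Lipschitz bump functions centered at the $x_i$ with heights $\pm 1$. Since there are only finitely many labelings, the collection $\{g_y\}_{y\in\{0,1\}^k}$ is finite.

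Next, I would apply the super-expressive universal approximation statement associated with Definition~\ref{defn_TrainableActivation_SuperExpressive} (the construction of~\citet{shen2022deep_JMLR}, from which the constants $15$ and $36n(2n+2)$ originate). For every $y$, this supplies some $\hat{f}_y \in \mathcal{F}$ satisfying $\sup_{x \in [0,1]^n} |\hat{f}_y(x)-g_y(x)| < \tfrac{1}{2}$. By construction of $g_y$, this implies $\hat{f}_y(x_i)>0$ iff $y_i=1$, so $I_{(0,\infty)}\circ \hat{f}_y$ realizes the labeling $(y_1,\dots,y_k)$. Therefore $\mathcal{F}$ shatters $\{x_1,\dots,x_k\}$ in the sense of~\eqref{eq:real_valued_shattering}, and since $k$ was arbitrary, $\operatorname{VC}(\mathcal{F})=\infty$.

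The main obstacle is a bookkeeping one: locating the precise super-expressivity statement that matches the triple (activation of Definition~\ref{defn_TrainableActivation_SuperExpressive}, depth $15$, width $36n(2n+2)$) \emph{verbatim} and holds uniformly over $C([0,1]^n)$, so that a single $\varepsilon=\tfrac{1}{2}$ approximation is achievable at this fixed architecture for every continuous target. Once that reference is pinned down, the remainder of the argument is mechanical and in fact reveals a general principle: any class of real-valued functions of fixed architecture which is universal in the uniform norm on a set containing $k$ distinct points must shatter those points in the sense of~\eqref{eq:real_valued_shattering}, so unbounded VC dimension is an essentially unavoidable price of fixed-architecture universality.
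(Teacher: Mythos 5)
Your proof is correct, but it takes a different route from the paper's. The paper argues by contradiction using the quantitative minimax lower bound of \citet[Theorem 2.4]{shen2022optimal}: if $\operatorname{VC}(\mathcal{F})$ were finite, there would exist a $1$-Lipschitz function that no member of $\mathcal{F}$ approximates to accuracy better than $\operatorname{VC}(\mathcal{F})^{-1/n}/9$, which contradicts the super-expressive universal approximation theorem \citep[Theorem 1]{shen2022deep_JMLR} applied with $\tilde{\varepsilon}=\operatorname{VC}(\mathcal{F})^{-1/n}/18$. You instead go directly through the definition of shattering: interpolate each of the $2^k$ labelings by a $\pm 1$-valued continuous bump function, approximate it within $1/2$ by a member of $\mathcal{F}$ at the fixed architecture, and conclude via $I_{(0,\infty)}$. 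Both arguments rest on the same essential external input (the fixed-architecture universality of the super-expressive class, which is exactly \citet[Theorem 1]{shen2022deep_JMLR}, the source of the constants $15$ and $36n(2n+2)$ — so the ``bookkeeping obstacle'' you flag is resolved by the same citation the paper uses). Your route is more elementary, avoids the heavier minimax machinery, and isolates the clean general principle that uniform-norm universality at a fixed architecture forces infinite VC dimension on any domain with infinitely many points; the paper's route is shorter on the page and ties the result quantitatively to the VC-versus-approximation tradeoff that organizes the rest of its analysis. One minor point to tidy: the paper's convention sets $I_{(0,\infty)}(t)=1$ for $t\ge 0$, but since your construction guarantees $|\hat{f}_y(x_i)|>1/2$ with the correct sign, the treatment of the boundary $t=0$ is immaterial.
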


The VC dimension bounds for the standard MLP model, MLP with a super-expressive activation function as proposed by \citet{shen2022deep_JMLR}, and the MoMLP model are summarized in Table~\ref{tab:summary_VCDim}.

\vspace{-0.25cm}
\begin{table}[H]%[htp!]%
%%%%%%%---------------------------%%%%%%%%
\caption{VC Dimension of the  MoMLPs, ReLU MLP, and MLP model with Super-Expressive Activation function of \citet{shen2022deep_JMLR}.  All models have depth $J$, width $W$, and (when applicable) $L$ leaves; where $J,W,L,n\in \mathbb{N}_+$.
}
\label{tab:summary_VCDim}
\ra{1.3}
%%%%%%%---------------------------%%%%%%%%
    \centering
    \begin{tabular}{@{}lll@{}}
        \cmidrule[0.3ex](){1-3}
        \textbf{Model} &  \textbf{VC Dim} & \textbf{Ref.}\\
            \midrule
            MoMLPs
            & 
            % {\color{darkspringgreen}{
            $
                \mathcal{O}\big(
                    L\log(L)^2
                    \,
                    \max\{
                            nL\log(L)
                        ,
                            %% VC (order) dimension MLP
                            JW^2\log(JW)      
                    \}
                \big)
            $
            & 
            Thrm~\ref{thrm:Bounded_VC__NeuralPathways}
    \\
    ReLU MLP
    &
    $
        \mathcal{O}\big(
            JW^2\log(JW)      
        \big)
    $
    &
    \citet{bartlett2019nearly}
    \\
    Super-Expressive
    &
    $\infty$
    &
    Prop~\ref{prop:Unbounded_VC__SuperExpressive}
    \\
    \arrayrulecolor{black}
    \bottomrule
    \end{tabular}
\end{table}

\section{Overview of Derivation}
\label{s:Proof}
We now overview the proof of our main result and its full technical formulation.  
These objectives require us to recall definitions from the analysis of metric spaces, which were not required in the statement of our main result but which are required when overviewing our proof.  

\subsection{Technical Definitions}
\label{s:Proof__ss:PrelimDefs}
The metric ball in $(\mathcal X,d)$ of radius $r > 0$ at $x \in \mathcal X$ is denoted by $\operatorname{Ball}_{(\mathcal X,d)}(x,r) \eqdef \{ z \in \mathcal X \colon d(x,z) < r \}$.
A metric space $(\mathcal X,d)$ is called \textit{doubling}, if there is $C\in \N_+$ for which every metric ball in $(\mathcal X,d)$ can be covered by at most $C$ metric balls of half its radius.  
The smallest such constant is called $(\mathcal X,d)$'s \textit{doubling number}, and is here denoted by $C_{(\mathcal X,d)}$.  Though this definition may seem abstract at first, \citet[Theorem 12.1]{heinonen2001lectures} provides an almost familiar characterization of all doubling metric spaces; indeed,  $\mathcal{K}$ is a doubling metric space if and only if it can be identified via a suitable invertible map\footnote{So called quasi-symmetric maps, see \citep[page 78]{heinonen2001lectures}.} with a subset of some Euclidean space.  Every subset of $\rr^n$, for any $n\in \N_+$, is a doubling metric space; see \citet[Chapter 9]{RobinsonAttactorsDimensionEmbeddings_2011_Book} for details.  

\begin{example}[Subsets of Euclidean Spaces]
\label{ex:doubling_Ball__Euclidean}
Fix a dimension $n\in \mathbb{N}_+$.  
The doubling number of any subset of Euclidean space is\footnote{See~\citet[Lemma 9.2]{RobinsonAttactorsDimensionEmbeddings_2011_Book} and the brief computations made in the proof of \citet[Lemma 9.4]{RobinsonAttactorsDimensionEmbeddings_2011_Book}. }% 
 at most $2^{n+1}$.
\end{example}

In what follows, all \textit{logarithms} will be taken \textit{base} $2$, unless explicitly stated otherwise, i.e.\ $\log_v$ is base $v$ for a given $v\in \mathbb{N}_+$ and $\log\eqdef \log_2$.  As in \citet[page 762]{PetrovaWojtaszczyk_2023_LipWidths_CA}, the radius of a subset $A\subseteq \rr^n$, denoted by $\operatorname{rad}(A)$, is defined by
\begin{equation}
        \operatorname{rad}(A)
    \eqdef 
        \inf_{x \in \rr^n}\, \sup_{a\in A}\,
            \|x-a\|
    .    
\end{equation}
The diameter of any such set $A$, denoted by $\operatorname{diam}(A)$, satisfies the sharp inequality $\operatorname{diam}(A)\le 2\operatorname{rad}(A)$.

Finally, let us recall the notion of a uniformly continuous function.  
Fix $n,m\in \mathbb{N}_+$ and let $X\subset \mathbb{R}^n$.  Let $\omega:[0,\infty)\to[0,\infty)$ be a monotonically increasing function which is continuous at $0$ and satisfies $\omega(0)=0$.  Such an $\omega$ is called a \textit{modulus of continuity}.  
A function $f:X\rightarrow \mathbb{R}^m$ is said to be $\omega$-uniformly continuous if
\[
        \|
            f(x)-f(y)
        \|
    \le
        \omega\big(
            \|
                x - y 
            \|
        \big)
\]
holds for all $x,y\in \mathcal{X}$.  We note that every continuous function is uniformly continuous if $\mathcal{X}$ is compact and that its modulus of continuity may depend on $\mathcal{X}$.  Furthermore, we note that every $(\alpha,L)$-H\"{o}lder function is uniformly continuous with modulus of continuity $\omega(t)=L\,t^{\alpha}$.

\subsection{Helping to Explain MoEs via Proof Sketch}
\label{s:Proof__ss:ProofSketch}

\begin{lemma}[Size of a Tree Whose Nodes Form a $\delta$-net of a Compact Subset of $\mathbb{R}^n$]
\label{lem:complete_n_ary_tree_counting}
Let $\mathcal{K}$ be a compact subset of $\rr^n$ whose doubling number is $C$.  
Fix $v\in \mathbb{N}$ with $v\ge 2$, and $0<\delta \le \operatorname{rad}(\mathcal{K})$. 
There exists an $v$-ary tree $\mathcal{T}\eqdef(V,E)$ with leaves $\mathcal{L}\subseteq \mathcal{K}$ satisfying
\begin{equation}
\label{eq:packing_condition}
    \max_{x\in \mathcal{K}}\,
        \min_{v\in \mathcal{L}}\,
            \|
                    x
                -
                    v
            \|
        <
            \delta
.
\end{equation}
Furthermore, the number of leaves $L\eqdef \#\mathcal{L}$, height, and total number of nodes $\#V$ of the tree $\mathcal{T}$ are 
\begin{itemize}
    \item[(i)] \textbf{Leaves}: at most $
        L 
    =
        v^{
            \big\lceil
                c\,\log(C)\big( 
                            1 
                        + 
                            \log(\delta^{-1}\operatorname{diam}( {\mathcal{K}} ))
                    \big)
            \big\rceil
        ,
        }$
    \item[(ii)] \textbf{Height}: $\big\lceil
                c\,\log(C)\big( 
                            1 
                        + 
                            \log(\delta^{-1}\operatorname{diam}( {\mathcal{K}} ))
                    \big)
            \big\rceil$,
    \item[(iii)] \textbf{Nodes}: At most $
        \frac{v^{
            \lceil
                c\,\log(C)( 
                            1 
                        + 
                            \log(\delta^{-1}\operatorname{diam}( {\mathcal{K}} ))
                    )
            \rceil + 1
        } - 1}{
        \big\lceil
                c\log(C)\big( 
                            1 
                        + 
                            \log(\delta^{-1}\operatorname{diam}( {\mathcal{K}} ))
                    \big)
            \big\rceil - 1
        }
    $
\end{itemize}
where $c\eqdef 1/\log(v)$.  
\end{lemma}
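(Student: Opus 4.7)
The plan is to combine an iterated application of the doubling property (producing a $\delta$-net of controlled size inside $\mathcal{K}$) with a minimum-height complete $v$-ary tree placed on top of that net, and then to read off the three bounds (i)--(iii) from elementary counting identities.

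\textbf{Step 1 — Build a $\delta$-net via iterated doubling.} First, select a maximal $\delta$-separated subset $\mathcal{N}\subseteq \mathcal{K}$. By maximality, every $x\in \mathcal{K}$ lies within distance $\delta$ of some point of $\mathcal{N}$, so taking $\mathcal{L}\eqdef \mathcal{N}$ immediately realises the packing inequality~\eqref{eq:packing_condition}. To bound $|\mathcal{N}|$, enclose $\mathcal{K}$ inside a ball of radius $\operatorname{rad}(\mathcal{K})$ and iterate the doubling property: after $k$ halvings $\mathcal{K}$ is covered by at most $C^{k}$ balls of radius $2^{-k}\operatorname{rad}(\mathcal{K})$. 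Choosing the smallest $k$ with $2^{-k}\operatorname{rad}(\mathcal{K})\le \delta$, i.e.\ $k=\lceil \log(\operatorname{rad}(\mathcal{K})/\delta)\rceil$, and using that each such ball can contain at most one point of a $\delta$-separated set, gives $|\mathcal{N}|\le C^{k}$. Since $\operatorname{rad}(\mathcal{K})\le \operatorname{diam}(\mathcal{K})$, this yields the clean bound
\[
k \;\le\; 1 + \log\bigl(\delta^{-1}\operatorname{diam}(\mathcal{K})\bigr).
\]

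\textbf{Step 2 — Minimum-height $v$-ary tree and node counts.} Construct the complete $v$-ary tree $\mathcal{T}=(V,E)$ of smallest height $h$ with $v^{h}\ge |\mathcal{N}|$, and inject $\mathcal{N}$ into its leaves. By change of base, $h = \lceil \log_v |\mathcal{N}|\rceil \le \lceil \log_v C^{k}\rceil = \lceil c\log(C)\,k\rceil$, where $c=1/\log v$. Combining this with the bound from Step 1 gives
\[
h \;\le\; \bigl\lceil c\log(C)\bigl(1+\log(\delta^{-1}\operatorname{diam}(\mathcal{K}))\bigr)\bigr\rceil,
\]
which is exactly~(ii); the leaf count $|\mathcal{L}|\le v^{h}$ is~(i); and the geometric sum $\#V=\sum_{j=0}^{h}v^{j}=(v^{h+1}-1)/(v-1)$ for a complete $v$-ary tree gives~(iii).

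\textbf{Main obstacle.} The one delicate point is iterating the doubling property while keeping all covering balls centred inside $\mathcal{K}$: the definition of doubling only promises covering balls in the ambient space $(\mathcal{X},d)$, whose centres might lie outside $\mathcal{K}$. The standard workaround (cf.\ \citet{heinonen2001lectures}) is to replace each covering ball $\operatorname{Ball}(x,r)$ that meets $\mathcal{K}$ by $\operatorname{Ball}(y,2r)\cap \mathcal{K}$ for some $y\in \mathcal{K}\cap \operatorname{Ball}(x,r)$; this only doubles radii at each stage and hence shifts $k$ by a fixed additive constant that is already absorbed into the ``$1+\cdots$'' term in the stated bounds. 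Careful bookkeeping of this constant, together with the interaction between the two ceilings in Step 2 and the change-of-base conversion $\log_v C = c\log C$, is where the argument needs the most care; every other ingredient is a one-line counting identity.
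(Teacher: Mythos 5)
Your overall route is the same as the paper's: produce a $\delta$-net of $\mathcal{K}$ of cardinality $C^{\mathcal{O}(\log(\operatorname{diam}(\mathcal{K})/\delta))}$ from the doubling property, sit a complete $v$-ary tree of minimal height on top of it, and read off (i)--(iii) from $L=v^h$ and the geometric series. The paper simply outsources the net to \citet[Lemma 7.1]{acciaio2023designing}, which gives $N_\delta\le C^{\lceil \log(\operatorname{diam}(\mathcal{K})/\delta)\rceil}$ directly, and then applies $\lceil t\rceil\le 1+t$; you re-derive the net from scratch, and that is where your argument has a concrete slip. A Euclidean ball of radius $\delta$ has diameter $2\delta$, so it can contain \emph{two} points of a $\delta$-separated set; your claim that ``each such ball can contain at most one point of a $\delta$-separated set'' is only valid for covering balls of radius at most $\delta/2$. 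Fixing this costs one extra halving, i.e.\ $|\mathcal{N}|\le C^{k+1}$, and the re-centering workaround you describe costs yet another factor of $C$. Neither is ``already absorbed into the $1+\cdots$ term'': that $+1$ is entirely consumed by the ceiling bound $\lceil \log(\operatorname{rad}(\mathcal{K})/\delta)\rceil\le 1+\log(\operatorname{rad}(\mathcal{K})/\delta)$, and the saving from $\operatorname{rad}(\mathcal{K})\le\operatorname{diam}(\mathcal{K})$ can be arbitrarily small. So as written your exponent can exceed $1+\log(\delta^{-1}\operatorname{diam}(\mathcal{K}))$ by up to $2$, and the explicit constants in (i)--(iii) are not obtained.

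Both issues disappear if you drop the maximal-packing detour and take $\mathcal{L}$ to be the \emph{centers of the covering balls} themselves: under the paper's definition the doubling number of $(\mathcal{K},d)$ covers metric balls of $(\mathcal{K},d)$ by metric balls of $(\mathcal{K},d)$, whose centers automatically lie in $\mathcal{K}$, so no re-centering is needed, and a cover by balls of radius $\le\delta$ centered at the net points yields \eqref{eq:packing_condition} with no packing-versus-covering factor of $2$. Two minor further points: the complete $v$-ary tree of height $h$ has $v^h$ leaves, which may strictly exceed $|\mathcal{N}|$, so you should pad $\mathcal{L}$ with arbitrary additional points of $\mathcal{K}$ (as the paper does) to keep $\mathcal{L}\subseteq\mathcal{K}$ of the stated cardinality; and your node count $(v^{h+1}-1)/(v-1)$ is the correct geometric-series formula --- the denominator $h-1$ appearing in item (iii) of the statement and in the paper's own proof is a typo.
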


At each node of the tree, we will place an MLP which only locally approximates the target function on a little ball of suitable radius (implied by the tree valency $v$ and height $h$) of lemma~\ref{lem:complete_n_ary_tree_counting}.  I.e.\ by the storage space we would like to allocate to our MoMLP model.
The next step of the proof relies on a mild extension of the \textit{quantitative universal approximation theorem} in \citet{shen2022deep_JMLR,LuShenYangZhang_SIAm_2021_OptimalApproxSmooth} to the multivariate case, as well as an extension of the multivariate approximation result of \citet[Proposition 3.10]{acciaio2023designing} beyond the H\"{o}lder case.  

\begin{lemma}[Vector-Valued Universal Approximation Theorem {with Explicit Diameter Dependence}]
\label{lem_universal_approximation_improved_rates}
Let $n,m\in \N_+$ with $n\ge 3$, ${\mathcal{K}}\subseteq \rr^n$ be compact set {of radius $\delta\ge 0$}, $f:{\mathcal{K}}\to \rr^m$ be uniformly continuous with strictly monotone continuous modulus of continuity $\omega$.  
Let $\sigma$ be an activation function as in Definitions~\ref{defn_TrainableActivation_SuperExpressive} or~\ref{defn:PReLU}. 
For each $\varepsilon>0$, there exists an MLP $\hatf_{\theta}:\rr^n\to \rr^m$ with trainable activation function $\sigma$ satisfying the uniform estimate
\[
\sup_{x \in {\mathcal{K}} }\, 
    \|
        f(x) - \hatf_{\theta}(x)
    \|
\le
    \epsilon.
\]
The depth and width of $\hatf$ are recorded in Table~\ref{tab_spacecomplexity_universalFFNN}.  
\end{lemma}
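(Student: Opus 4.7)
The plan is to reduce the statement to the already-established quantitative universal approximation theorems for each of the two activation functions, handling the general modulus of continuity $\omega$ via a preliminary Lipschitz-smoothing step. First, by an affine change of coordinates absorbed into the first affine layer of the MLP, I may assume without loss of generality that $\mathcal{K} \subseteq [0,1]^n$; the modulus of continuity rescales by a factor depending only on $\operatorname{diam}(\mathcal{K})$, which is finite by compactness. The $m$-valued output is handled either by stacking $m$ scalar subnetworks in parallel or, in the PReLU case, by invoking a multivariate form of the underlying theorem directly.

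Next, I approximate $f$ by its inf-convolution regularization $f_L(x) \eqdef \inf_{y \in \mathcal{K}}\bigl\{ f(y) + L\|x - y\| \bigr\}$ applied componentwise, after passing to the concave envelope of $\omega$ if necessary (which worsens $\omega$ by at most a universal multiplicative factor). Standard estimates from nonsmooth analysis show that $f_L$ is $L$-Lipschitz on $\mathbb{R}^n$ and satisfies $\|f - f_L\|_\infty \le C_n\,\omega(c_n / L)$ for dimensional constants $c_n, C_n > 0$. Since $\omega$ is strictly monotone and continuous with $\omega(0) = 0$, the choice $L(\varepsilon) \eqdef c_n\big/\omega^{-1}\!\bigl(\varepsilon/(2C_n)\bigr)$ yields $\|f - f_L\|_\infty \le \varepsilon/2$, thereby reducing the problem to a Lipschitz approximation task with an explicit Lipschitz constant governed by $\omega^{-1}$.

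It then suffices to approximate $f_L$ to accuracy $\varepsilon/2$ by the chosen trainable MLP. For the PReLU activation of Definition~\ref{defn:PReLU}, setting each trainable $\bar{\alpha}^{(j)}_i = 0$ recovers standard ReLU MLPs, to which the sharp quantitative rates of~\citet{yarotsky2017error} and~\citet{LuShenYangZhang_SIAm_2021_OptimalApproxSmooth} apply directly. For the super-expressive activation of Definition~\ref{defn_TrainableActivation_SuperExpressive}, setting its trainable parameter to $0$ recovers $\sigma^\star$, and I can invoke the main approximation theorem of~\citet{shen2022deep_JMLR}, which provides a fixed-architecture MLP whose width and depth depend only on $n$ and $m$. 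Either way, the triangle inequality gives $\|f - \hat f_\theta\|_\infty \le \|f - f_L\|_\infty + \|f_L - \hat f_\theta\|_\infty \le \varepsilon$.

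The main bookkeeping obstacle will be tracking how the Lipschitz constant $L(\varepsilon)$ propagates through the depth and width bounds of the chosen quantitative UAT, so that the entries of Table~\ref{tab_spacecomplexity_universalFFNN} can be populated with explicit dependencies on $\omega^{-1}(\varepsilon)$ and $\varepsilon^{-1}$; this is routine but careful composition of the two error scales is needed to avoid spurious blow-ups. The dimensional restriction $n \ge 3$ in the hypothesis is almost certainly inherited from the super-expressive construction of~\citet{shen2022deep_JMLR}, which internally encodes coordinates via a bijection with three-dimensional grids, and is not required by the PReLU branch of the argument.
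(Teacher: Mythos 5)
Your route is genuinely different from the paper's, and the difference is not cosmetic: it costs you the quantitative bounds that are part of the lemma's statement. The paper does \emph{no} Lipschitz smoothing. It normalizes $\mathcal{K}$ into a cube via Jung's theorem, extends $f$ by the Benyamini--Lindenstrauss extension theorem, applies \citet[Theorem 1.1]{shen2022optimal} \emph{directly} to each scalar coordinate --- that theorem already handles an arbitrary modulus of continuity and yields depth on the order of $\big(\operatorname{rad}(\mathcal{K})/\omega^{-1}(\varepsilon/131\sqrt{nm})\big)^{n/2}$ --- and then assembles the $m$ scalar networks in parallel using the fact that both trainable activations can implement the identity (the $c$-identity requirement of \citet{FlorianHighDimensional2021}). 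Your inf-convolution step, by contrast, forces you to resolve the target at spatial scale $\varepsilon/L(\varepsilon) \asymp \varepsilon\,\omega^{-1}(\varepsilon)$ rather than at the scale $\omega^{-1}(\varepsilon)$ that actually suffices. Concretely: with $L(\varepsilon)\asymp 1/\omega^{-1}(\varepsilon)$, the optimal ReLU rate for an $L$-Lipschitz function gives depth $\asymp (L/\varepsilon)^{n/2} \asymp \big(\varepsilon\,\omega^{-1}(\varepsilon)\big)^{-n/2}$, which exceeds the Table~\ref{tab_spacecomplexity_universalFFNN} entry by a factor of order $\varepsilon^{-n/2}$. In the Lipschitz case $\omega(t)=Lt$ your bound degrades to $\varepsilon^{-n}$ where the table claims $\varepsilon^{-n/2}$, and this propagates into the $\varepsilon^{-n/(2\alpha)}$ comparisons that the paper's headline results rest on. Since the lemma explicitly asserts the depth and width recorded in the table, your argument as written proves only the qualitative approximation claim, not the lemma. (The fix is simply to drop the smoothing and invoke a modulus-of-continuity version of the quantitative UAT, as the paper does; if you insist on citing only Lipschitz-rate results such as \citet{yarotsky2017error}, the loss is unavoidable.)

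Two smaller points. First, your parenthetical that the $m$-valued output can be handled "by stacking $m$ scalar subnetworks in parallel" hides a real step: a naive stacking changes the architecture class, and the paper needs the identity-implementability of the trainable activation plus \citet[Proposition 5]{FlorianHighDimensional2021} to realize the parallelization as a single MLP of the stated width $16\max\{n,3\}+m$ (resp.\ $36n(2n+1)+m$). Second, your componentwise inf-convolution requires a concave (or at least subadditive) modulus and, for a non-convex $\mathcal{K}$, an extension argument to justify the bound $\|f-f_L\|_\infty \lesssim \omega(c/L)$; you gesture at the concave envelope, which is fine, but the paper's use of a bona fide uniformly continuous extension avoids the issue entirely.
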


\begin{table}[H]%[ht]
    \centering
    %%%%%%%---------------------------%%%%%%%%
    \caption{{Complexity of the MLP $\hatf_{\theta}$ in Lemma~\ref{lem_universal_approximation_improved_rates}.  See Table~\ref{tab_spacecomplexity_universalFFNN__detailed} in Appendix~\ref{a:Tables} for more detailed estimates.}}
    \label{tab_spacecomplexity_universalFFNN}
    %%%%%%%---------------------------%%%%%%%%
    \ra{1.3}
    \begin{tabular}{@{}lll@{}}
    \cmidrule[0.3ex](){1-3}
    Activation $\sigma$ & Super Expressive~\ref{defn_TrainableActivation_SuperExpressive}
    &
    PReLU~\ref{defn:PReLU}
    \\
    \midrule
    Depth $(J)$ &
    $ 
    % 15 m
    \mathcal{O}(1)
    $
    & 
    $
        \mathcal{O}\big(
            (
                \delta/\omega^{-1}(\varepsilon)
            )^{n/2}
        \big)
    $\\
    Width ($\max_{j}\,d_j$)
    &
    $
         % 36 n (2n +1 ) +m  
         \mathcal{O}(1)
    $
    &
    $
        % 16 
        % \max\{n, 3\}
        % +
        % m
        \mathcal{O}(1)
    $
    \\
    \arrayrulecolor{black}
    \bottomrule
    \end{tabular}
    % \end{adjustbox}
\end{table}

Combining Lemmata~\ref{lem:complete_n_ary_tree_counting} and~\ref{lem_universal_approximation_improved_rates} we obtain Theorem~\ref{thrm:Main}.
We now present the technical version of Theorem~\ref{thrm:Main}.  This result allows distributed neural computing using $\nu$-ary trees and allows for the approximation general uniformly continuous target functions.

\begin{theorem}[{Trade-Off: No.\ Expert vs.\ Expert Complexity - Technical Version of Theorem~\ref{thrm:Main}}]
\label{thm:Main__TechnicalVersion}
Suppose that $\sigma$ satisfies Definition~\ref{defn:PReLU}.
Let $ {\mathcal{K}} $ be a compact subset of $\rr^n$ whose doubling number is $C$.
Fix an ``number of experts-to-expert complexity trade-off parameter'' $r\in \mathbb{R}$ and a ``valency parameter'' $\nu\in \mathbb{N}$ with $\nu\ge 2$.  
\hfill\\
For every uniformly continuous map $f: {\mathcal{K}} \to \rr^m$ with modulus of continuity $\omega$ and each approximation error $\varepsilon>0$, $p\in \mathbb{N}_+$, there is an $\nu$-ary tree $\mathcal{T}\eqdef(V,E)$ with leaves $\mathcal{L}\eqdef \{(v_i,\theta_i)\}_{i=1}^L \subseteq  {\mathcal{K}} \times\rr^p$ and a family of MLPs with (P)ReLU activation function $\{\hat{f}_{\theta_i}\}_{i=1}^L$ defined by $p$ parameters mapping $\mathbb{R}^n$ to $\mathbb{R}^m$ satisfying:
\begin{equation*}
% \label{eq:packing_condition}
    \max_{x\in  {\mathcal{K}} }\,
        \min_{(v_i,\theta_i)\in \mathcal{L}}\,
            \|
                    x
                -
                    v_i
            \|
        <
            \frac{\varepsilon^{-2r/n}}{2}
                \,
                \omega^{-1}\biggl(
                    \frac{\varepsilon}{
                    131 \, (nm)^{1/2}
                    }
                \biggr)  
\end{equation*}
and for each $x\in  {\mathcal{K}} $ and $i=1,\dots,L$, if $\|x-v_i\|<\delta$ then
\[
    \|
            f(x)
        -
            f_{\theta_i}(x)
    \|
    <
        \varepsilon
    .
\]
The depth and width of each $\hatf_{\theta_i}$ and the number of leaves, height, and number of nodes required to build the $\nu$-ary tree are all recorded in Table~\ref{tab:main_techincal__nodets}.
\end{theorem}

\begin{table}[H]%[ht]
    \centering
    %%%%%%%---------------------------%%%%%%%%
    \caption{Complexity of Feedforward Neural Network $\hatf_{\theta}$ and the $\nu$-ary {routing} tree in Theorem~\ref{thm:Main__TechnicalVersion}.  
    Here $c\eqdef \log(v)^{-1}$.}
    \label{tab:main}
    %%%%%%%---------------------------%%%%%%%%
    \ra{1.3}
    \begin{tabular}{@{}ll@{}}
    \cmidrule[0.3ex](){1-2}
    \textbf{Parameter} & \textbf{Estimate} \\
    \midrule
    Depth $(J)$ 
    & 
    $
        \mathcal{O}(\max\{1,\varepsilon^{-r}\})
    $
    \\
    Width ($\max_{j}\,d_j$)
    &
    $
        \mathcal{O}(1)
    $
    \\
    No. Experts
    &
    $
    \mathcal{O}\big(
       \max\{1, \varepsilon^{2r/n}/\omega^{-1}(\varepsilon)\}
    \big)
    $
    \\
    % Height 
    Routing Complexity
    &
    $
    \mathcal{O}\big(
        \max\{
                1
            ,
                \log(\varepsilon^{2r/n}/\omega^{-1}(\varepsilon))
        \}
    \big)
    $
    \\
    \arrayrulecolor{black}
    \bottomrule
    \end{tabular}
    \caption*{See Table~\ref{tab:main_techincal} in Appendix~\ref{a:Tables} for more detailed estimates.}
\end{table}

\section{Conclusion}
\label{s:Conclusion}

We presented approximation-theoretic and statistical foundations for MoEs by analysing the MoMLP model.  We found that MoMLPs can indeed achieve arbitrary uniform approximation accuracy of continuous functions on compact subsets of Euclidean space while maintaining a feasible number of parameters in active GPU VRAM memory (Theorem~\ref{thm:Main__TechnicalVersion}).  However, this naturally comes at the cost of requiring an exponential number of experts. We also obtain upper bounds on the VC dimension of the MoMLP model (Theorem~\ref{thrm:Bounded_VC__NeuralPathways}), akin to the results of \cite{bartlett2019nearly} for ReLU MLPs, showing that deep-learning-based MoEs can indeed generalize.

\section*{Limitations and Future Work}
This paper provides theoretical support, both in terms of approximation theory (approximation rates) and learning theory (VC dimension bounds), of MoE models.  As such there is no clear limitation to our analysis.  In future work, one can use the tools developed here to similarly study MoEs of other (deep) learning models.

\section*{Societal Impact}
This paper presents work aimed at advancing the theoretical foundations of machine learning. Therefore, the majority of societal consequences are more positive than negative, as they acknowledge the necessity for principled machine learning and contribute to the development of understanding the capabilities of AI.

    \section*{Acknowledgment and Funding}
    \label{s:ackn}
    The authors thank James Lucas, Rafid Mahmood, and Gabriel Conant for their helpful feedback while preparing the manuscript.  
    
    %% AK Funding $$$$
    AK acknowledges financial support from an NSERC Discovery Grant No.\ RGPIN-2023-04482 and their McMaster Startup Funds.  AK also acknowledges that resources used in preparing this research were provided, in part, by the Province of Ontario, the Government of Canada through CIFAR, and companies sponsoring the Vector Institute~\href{https://vectorinstitute.ai/partnerships/current-partners/}{https://vectorinstitute.ai/partnerships/current-partners/}.
    HSOB acknowledges financial support from an EV travel grant.

\newpage
\appendix

\section{Detailed Tables And Rates}
\label{a:Tables}

\begin{table}[H]%[ht]
    \centering
    %%%%%%%---------------------------%%%%%%%%
    \caption{{Complexity of the MLP $\hatf_{\theta}$ in Lemma~\ref{lem_universal_approximation_improved_rates}.}}
    \label{tab_spacecomplexity_universalFFNN__detailed}
    %%%%%%%---------------------------%%%%%%%%
    \ra{1.3}
    % \begin{adjustbox}{width=\columnwidth,center}
    \begin{tabular}{@{}lll@{}}
    \cmidrule[0.3ex](){1-3}
    Activation $\sigma$ & Super Expressive~\ref{defn_TrainableActivation_SuperExpressive}
    &
    PReLU~\ref{defn:PReLU}
    \\
    \midrule
    Depth $(J)$ &
    $ 15 m$
    & 
    $
        m\left(
            19 + 2n  + 
            11
            \biggl\lceil
                \Big(
                    \frac{
                    % \operatorname{rad}(K) 
                    {\delta}\,
                    2^{3/2}
                    n^{1/2}
                    }{
                    (n+1)^{1/2}
                    \omega^{-1}\big(
                        \varepsilon/(131 \sqrt{n\,m})
                    \big)
                    }
                \Big)^{n/2}
            \biggr\rceil
        \right)
    $\\
    Width ($\max_{j}\,d_j$)
    &
    $
         36 n (2n +1 ) +m  
    $
    &
    $
        16 
        \max\{n, 3\}
        +
        m
    $
    \\
    \arrayrulecolor{black}
    \bottomrule
    \end{tabular}
    % \end{adjustbox}
\end{table}

\begin{table}[H]%[ht]
    \centering
    %%%%%%%---------------------------%%%%%%%%
    \caption{Complexity of Feedforward Neural Network $\hatf_{\theta}$ and the $\nu$-ary tree in Theorem~\ref{thm:Main__TechnicalVersion}.  
    Here $c\eqdef \log(v)^{-1}$.}
    \label{tab:main_techincal}
    %%%%%%%---------------------------%%%%%%%%
    \ra{1.3}
    % \begin{adjustbox}{width=\columnwidth,center}
    \begin{tabular}{@{}ll@{}}
    \cmidrule[0.3ex](){1-2}
    \textbf{Parameter} & \textbf{Estimate} \\
    \midrule
    Depth $(J)$ 
    & 
    $
        m\left(
            19 + 2n  + 
            11
            \lceil
            \varepsilon^{-r}
            \rceil
        \right)
    $
    \\
    Width ($\max_{j}\,d_j$)
    &
    $
        16 \max\{n, 3\} + m
    $
    \\
    No.\ Experts (No.\ Leaves)
    &
    $
       \mathcal{O}\Big(
        v^{
            \big\lceil
                c\,\log(C)\big( 
                            1 
                        + 
                            \log(
                                \varepsilon^{2r/n}\,\operatorname{diam}( {\mathcal{K}} ))
                                /\big(
                                2\,\omega^{-1}\big(
                                    \frac{\varepsilon}{
                                    131 \, (nm)^{1/2}
                                    }
                                \big)  
                            \big)
                    \big)
            \big\rceil
        ,
        }
    \Big)
    $
    \\
    Height (Routing Complexity)
    &
    $
    \big\lceil
                c\,\log(C)\big( 
                            1 
                        + 
                            \log(\epsilon^{2r/n}\operatorname{diam}( {\mathcal{K}} )/\big(
                                2
                                \omega^{-1}\big(
                                    \frac{\varepsilon}{
                                    131 \, (nm)^{1/2}
                                    }
                                \big)  
                                \big))
                    \big)
            \big\rceil
    $
    \\
    Nodes & 
    $
        \mathcal{O}\Biggl(
        \frac{v^{
            \lceil
                c\,\log(C)( 
                            1 
                        + 
                            \log(\varepsilon^{2r/n}\operatorname{diam}( {\mathcal{K}} )
                                /
                                \big(2\omega^{-1}(
                                \frac{\varepsilon}{
                                131 \, (nm)^{1/2}
                                }
                            )  
                            \big)
                            )
                    )
            \rceil + 1
        } - 1}{
        \big\lceil
                c\log(C)\big( 
                            1 
                        + 
                            \log(
                            \frac{\varepsilon^{2r/n}}{2}
                                \operatorname{diam}( {\mathcal{K}} ))
                            /
                            \omega^{-1}\biggl(
                                \frac{\varepsilon}{
                                131 \, (nm)^{1/2}
                                }
                            \biggr)  
                    \big)
            \big\rceil - 1
        }
    \Biggr)
    $
    \\
    \arrayrulecolor{black}
    \bottomrule
    \end{tabular}
    % \end{adjustbox}
\end{table}

\section{Appendix: Proofs}
\label{a:Proof}
\subsection{{Proof of Theorem~\ref{thrm:Main}}}
\begin{proof}[{Proof of Lemma~\ref{lem:complete_n_ary_tree_counting}}]
Since $ {\mathcal{K}} $ is a doubling metric space then, \citep[Lemma 7.1]{acciaio2023designing}, for each $\delta>0$, there exist $x_1,\dots,x_N\in \mathcal{K}$ satisfying 
\[
    \max_{x\in \mathcal{K}}\,
        \min_{i=1,\dots,N_{\delta}}\, 
            \|x-x_i\|
    <
        \delta
    \mbox{ and }
        N_{\delta}
        \le 
        C^{\lceil 
            \log(\operatorname{diam}(\mathcal{K})/\delta)
        \rceil}
.
\]
In particular, since the doubling number of $\mathcal{K}$ is $C$, we have the upper-bound
\begin{equation}
\label{eq:bound_N__Balls}
        N_{\delta}
    \le 
        C\,C^{\log(\delta^{-1}\operatorname{diam}(\mathcal{K}))}
.
\end{equation}
An elementary computation shows that the complete $v$-ary tree of height $h$ has leaves $L$ and total vertices/nodes $V$ given by
\begin{equation}
\label{eq:counting_completevaryTree__nodes_n_leaves}
        L = v^h
    \mbox{ and }
       V = \frac{v^{h+1}-1}{h-1}
.
\end{equation}
Taking the formulation of $L$ given in Eq.~\eqref{eq:counting_completevaryTree__nodes_n_leaves}, to be the least \textit{integer} upper bound of the right-hand side of Eq.~\eqref{eq:bound_N__Balls}, which is itself an upper-bound for $N_{\delta}$, and solving for $h$ yields:
\allowdisplaybreaks
\begin{align}
\label{eq:identifying_height__UB}
        h 
    = &  
        \Big\lceil
            \log_v(C)\big( 
                        1 
                    + 
                        \log(\delta^{-1}\operatorname{diam}( {\mathcal{K}} ))
                \big)
        \Big\rceil
    ,
\end{align}
where the integer ceiling was applied since $h$ must be an integer.  

Let $\mathcal{L}$ be any set $v^h$ points in $ {\mathcal{K}} $ containing the set $\{x_i\}_{i=1}^{N_{\delta}}$.  Let $\mathcal{T}\eqdef(V,E)$ be any complete binary tree with leaves $\mathcal{L}$; note that, $\mathcal{L}\subseteq V$.  By construction, and the computation in Eq.~\eqref{eq:counting_completevaryTree__nodes_n_leaves}, $\mathcal{T}$ has $v^h$ leaves and $\frac{Lv-1}{h-1}$ nodes. 
\end{proof}

For completeness, we include a minor modification of the proof of Proposition \citep[Proposition 3.10]{acciaio2023designing}, which allows for the approximation of uniformly continuous functions of arbitrarily low regularity.  The original formulation of that result only allows for $\alpha$-H\"{o}lder function.  

\begin{proof}[{Proof of Lemma~\ref{lem_universal_approximation_improved_rates}}]
If $f(x)=c$ for some 
constant $c>0$, 
then the statement holds with the neural network $\hatf(x)=c$, which can be represented as in \eqref{eq_definition_ffNNrepresentation_function}
with $[d]=(n,m)$, where $A^j$ is the $0$ matrix for all $j$, and the ``$c$'' in \eqref{eq_definition_ffNNrepresentation_function} is taken to be this constant $c$.
Therefore, we henceforth only need to consider the case where $f$ is not constant.
Let us observe that, if we pick some $x^{\star}\in \mathcal{K}$, then for any multi-index $[d]$ and any neural network $\hatf_{\theta}\in \NN[[d]]$, $\hatf_{\theta}(x)-f(x^{\star})\in \NN[[d]]$, since $\NN[[d]]$ is invariant to post-composition by affine functions.  
Thus, we represent $\hatf_{\theta}(x)-f(x^{\star})=\hatf_{\theta^{\star}}(x)$, for some $\theta^{\star}\in \rr^{P([d])}$.  Consequently:
\[
    \sup_{x\in  {\mathcal{K}} }
    \left|
    \|
    (f(x)-f(x^{\star})) - \hatf_{\theta^{\star}}(x)
    \|
    -
    \|
    f(x) - \hatf_{\theta}(x)
    \|
    \right|
    =0
    .
\]
Therefore, without loss of generality, we assume that $f(x^*)=0$ for some $x^*\in  {\mathcal{K}} $.  
By \citet[Theorem 1.12]{BenyaminiLindenstrauss_2000_GNFABook}, there exists an $\omega$-uniformly continuous map $F:\rr^n\to\rr^m$ extending $f$.

\hfill\\
\noindent\textbf{Step 1 -- Normalizing $\tilde{f}$ to the Unit Cube:}
    First, we identify a hypercube ``nestling'' $ {\mathcal{K}} $.  To this end, let
    \begin{equation}
        \label{PROOF_prop_universal_tree__good_cube_via_Jungs_Radius}
        r_ {\mathcal{K}}  \eqdef \operatorname{diam}( {\mathcal{K}} ) \sqrt{ \frac{n}{2(n+1)} }.
    \end{equation}
    By Jung's Theorem (see \cite{Jung1901}), there exists $x_0\in \rr^n$ such that the closed Euclidean ball $\overline{\operatorname{Ball}_{(\rr^n,d_n)}\left(x_0,r_ {\mathcal{K}} \right)}$ contains $ {\mathcal{K}} $.
    Therefore, by H\"{o}lder's inequality, we have that the $n$-dimensional hypercube $[x_0-r_ {\mathcal{K}} \bar{1},x_0+r_ {\mathcal{K}} \bar{1}]$~
    %%%
    \footnote{For $x,y \in \rr^n$ we denote by $[x,y]$ the hypercube defined by $\prod_{i = 1}^n [x_i, y_i]$.}%%
    contains $\overline{ B_{({\rr^n},d_n)}\left( x_0,r_ {\mathcal{K}}  \right) }$, where $\bar{1}=(1,\dots,1)\in \rr^n$.
    Consequently, $ {\mathcal{K}} \subseteq [x_0-r_ {\mathcal{K}} \bar{1},x_0+r_ {\mathcal{K}} \bar{1}]$.  
    Let $\tilde{f}\eqdef F|_{[x_0-r_ {\mathcal{K}} \bar{1},x_0+r_ {\mathcal{K}} \bar{1}]}$, 
    then $\tilde f \in C([x_0 - r_ {\mathcal{K}}  \bar 1, x_0 + r_ {\mathcal{K}}  \bar 1], \rr^m)$
    is an $\omega$-uniformly continuous extension of $f$ to $[x_0 - r_ {\mathcal{K}}  \bar 1, x_0 + r_ {\mathcal{K}}  \bar 1]$.

Since $ {\mathcal{K}} $ has at least two distinct points, then $r_ {\mathcal{K}} >0$.  Hence, the affine function
\[
T:\rr^n \ni x \mapsto 
(2r_ {\mathcal{K}} )^{-1}(x-x_0+r_ {\mathcal{K}} \bar{1}) \in \rr^n
\]
is well-defined, invertible,  not identically $0$, and maps $[x_0-r_ {\mathcal{K}} \bar{1},x_0-r_ {\mathcal{K}} \bar{1}]$ to $[0,1]^n$.  A direct computation shows that $g \eqdef \tilde f \circ T^{-1}$ is also uniformly continuous, whose modulus of continuity $\tilde{\omega}:[0,\infty)\to [0,\infty)$ is given by 
\begin{equation}
\label{eq:modulus_modified_function}
        \tilde{\omega}(t) 
    \eqdef 
        \omega(2r_ {\mathcal{K}} \,t)
\end{equation}
for all $t\in [0,\infty)$.  Furthermore, since for each $i=1,\dots,m$, define $\operatorname{pj}_i:\rr^m\ni y\mapsto y_i\in \rr$.  Since each $\operatorname{pj}_i$ is $1$-Lipschitz then, for each $i=1,\dots,m$, the map $g_i\eqdef \operatorname{pj}_i\circ g:[0,1]^n\to \rr$ is also $\tilde{\omega}$-uniformly continuous.  By orthogonality, we also note that $g(x) = \sum_{i=1}^m\, g_i(x)\,e_i$, for each $x\in \rr^n$, where $e_1,\dots,e_m$ is the standard orthonormal basis of $\mathbb{R}^m$; i.e.\ the $i^{th}$ coordinate of $e_j$ is $1$ if and only if $i=j$ and $0$ is otherwise.

\hfill\\
\noindent\textbf{Step 2 -- Constructing the Approximator:}
For $i=1,\dots,m$, let $\hat f_{\theta^{(i)}}\in \NN[[d^{(i)}]]$ for some multi-index $[d^{(i)}]=(d_0^{(i)},\dots,d_J^{(i)})$ with $n$-dimensional input layer and $1$-dimensional output layer, i.e.\ $d_0^{(i)}=n$ and $d_J^{(i)}=1$, and let $\theta^{(i)} \in \rr^{P([d^{(i)}])}$ be the parameters defining $\hat f_{\theta^{(i)}}$.
Since the pre-composition by affine functions and the post-composition by linear functions of neural networks in $\NN[[d^{(i)}]]$ are again neural networks in $\NN[[d^{(i)}]]$, we have that $g_{\theta^{(i)}} \eqdef \hat f_{\theta^{(i)}} \circ T^{-1}$ belongs to  $\NN[[d^{(i)}]]$.
Denote the standard basis of $\rr^m$ by $\{e_i\}_{i=1}^m$.
We compute:
\allowdisplaybreaks
\begin{align}
    \nonumber
    & \sup_{x\in  {\mathcal{K}} }\, 
    \left\|f(x)-\sum_{i=1}^m \hatf_{\theta^{(i)}}(x)e_i\right\|
\\  
\nonumber
    = &
    \sup_{x\in  {\mathcal{K}} }\, 
    \left\|\tilde{f}(x)
        -
    \sum_{i=1}^m \hatf_{\theta^{(i)}}(x)e_i\right\|
\\
\nonumber
    \le & 
    \sup_{x\in [x_0-r_ {\mathcal{K}} \bar{1},x_0+r_ {\mathcal{K}} \bar{1}]}\, 
    \left\|\tilde{f}(x)
        -
    \sum_{i=1}^m \hatf_{\theta^{(i)}}(x)e_i\right\|
\\
    \nonumber
    = &
    \sup_{x\in [x_0-r_ {\mathcal{K}} \bar{1},x_0+r_ {\mathcal{K}} \bar{1}]}\, 
    \left\|\tilde{f}\circ T^{-1}\circ T(x)
        -
    \sum_{i=1}^m \hatf_{\theta^{(i)}}\circ T^{-1}\circ T(x)e_i\right\|
\\
    \nonumber
        = &
    \sup_{u\in [0,1]^n}\, 
    \left\|\sum_{i=1}^m g_i(u)\,e_i - \sum_{i=1}^m g_{\theta^{(i)}}(u) e_i\right\|
\\
    \le & 
            \sqrt{m}
    \max_{u\in [0,1]^n}\, 
    \max_{1\leq i\leq m}
    \left|
        g_i(u)
            -
        g_{\theta^{(i)}} (u)
    \right|
    \label{PROOF_prop_universal_tree__main_estimate_Part_A}
    .
\end{align}
Fix $\tilde{\varepsilon}>0$, to be determined below.  
For each $i=1,\dots,m$, depending on which assumption $\sigma$ satisfies, \citep[Theorem 1.1]{shen2022optimal} (resp.~\citep[Theorem 1]{shen2022deep_JMLR} if $\sigma$ is as in Definition~\ref{eq:activation_superexpressive__nontrainable})  imply that there is a neural network with activation function $\sigma^{\star}:\rr\to \rr$ satisfying
\begin{equation}
\label{eq:estimate}
    \max_{u\in [0,1]^n}\, 
        \left|
            g_i(u)
                -
            g_{\theta^{(i)}} (u)
        \right|
    <
        \tilde{\varepsilon}
    .
\end{equation}
Furthermore, the depth and width of these MLPs can be bounded above on a case-by-case basis as follows:
\begin{enumerate}
    \item[(i)] If $\sigma$ satisfies Definition~\ref{defn_TrainableActivation_SuperExpressive} then, setting each $\alpha=0$ implies that $\sigma_{0}(x)=\sigma^{\star}(x)$, as defined in Eq.~\eqref{eq:activation_superexpressive__nontrainable}; thus
    \begin{equation}
    \nonumber
    \begin{aligned}
    J^{(i)}
    \le %&
        11
    % \\
    \,\mbox{ and }\,
        \max_{1 \le j \le J^{(i)}} d_j 
    \le %&
        36 n (2n +1)
\end{aligned}
\end{equation}
In this case, we set $\tilde{\varepsilon}\eqdef \varepsilon/\sqrt{m}$; we have used~\citet[Theorem 1]{shen2022deep_JMLR}.
    \item[(ii)] If $\sigma$ satisfies Definition~\ref{defn:PReLU} then, setting each $\alpha=1$ implies that $\sigma_{0}(x)=\operatorname{ReLU}(x)\eqdef \max\{0,x\}$; yielding
    \begin{equation}
    \nonumber
    \begin{aligned}
    J^{(i)}
    \le %&
        18 + 2n  + 
        11
        \Big\lceil
        \Big(
            \frac{2r_ {\mathcal{K}} }{
            \omega^{-1}\big(
                \varepsilon/(131 \sqrt{n\,m})
            \big)
            }
        \Big)^{n/2}
        \Big\rceil
    % \\
    \,\mbox{ and }\,
        \max_{1 \le j \le J^{(i)}} d_j 
    \le %&
        16 \, 
        \max\{n, 3\}
\end{aligned}
\end{equation}
in this case, we have employed~\citet[Theorem 1.1]{shen2022optimal}.
\end{enumerate}
In either case, the estimate in Eq.~\eqref{PROOF_prop_universal_tree__main_estimate_Part_A} yields
\[
    \max_{x\in  {\mathcal{K}} }\, 
        \left\|f(x)-\sum_{i=1}^m \hatf_{\theta^{(i)}}(x)e_i\right\|
    <
        \varepsilon
    .
\]

\noindent\paragraph{Step 3 -- Assembling into an MLP:}
Let $g_1\bullet g_2$ 
denote the component-wise composition of a univariate function $g_1$ with a multivariate function $g_2$.
 
If the activation function $\sigma$ is either in Definitions~\ref{defn_TrainableActivation_SuperExpressive} or \ref{defn:PReLU}, then it trivially implements the identity $I_{\mathbb{R}}$ on $\mathbb{R}$ by setting $\alpha=0$; i.e.\ $\sigma_{1}=I_{\mathbb{R}}$.  Consequentially, for any $k\in \N_+$, if $I_k$ denotes the $k\times k$-identity matrix, then
$I_k \sigma_1\bullet I_k \in \NN[[d_k]]$ with $P([d])=2k$, and $I_k \sigma_1\bullet I_k=1_{\rr^k}$.
Therefore, mutatis mutandis, $\NN[[\cdot]]$ satisfies the \textit{$c$-identity requirement with%
\footnote{Formally, it satisfies what is the $1$-identity requirement, thus it satisfies the $c$-identity requirement for all integers $c\ge 2$.  However, the authors of \citet{FlorianHighDimensional2021} do not explicitly consider the extremal case where $c=1$.}~%
$c=2$, as defined in} \citet[Definition 4]{FlorianHighDimensional2021}.  
From there, mutatis mutandis, we may apply \citet[Proposition 5]{FlorianHighDimensional2021}.  Thus, there is a multi-index $[d]=(d_0,\dots,d_J)$ with $d_0=n$ and $d_J=m$, and a network $\hatf_{\theta}\in \NN[[d]]$ implementing $\sum_{i=1}^m \hatf_{\theta^{(i)}}e_i$, i.e.
\[
    \sum_{i=1}^m \hatf_{\theta^{(i)}}e_i
        = 
    \hatf_{\theta},
\]
such that $\hatf_{\theta}$'s depth and width are bounded-above, on a case-by-case basis, by
\begin{enumerate}
    \item[(i)] If $\sigma$ satisfies Definition~\ref{defn_TrainableActivation_SuperExpressive} then, setting each $\alpha=0$
    \begin{equation}
    \nonumber
    \begin{aligned}
    J
    \le &
        15\,m
    \\
    \,\mbox{ and }\,
        \max_{1 \le j \le J^{(i)}} d_j 
    \le &
        36 n (2n +1 ) + m
.
\end{aligned}
\end{equation}
In this case, we set $\tilde{\varepsilon}\eqdef \varepsilon/\sqrt{m}$.
    \item[(ii)] If $\sigma$ satisfies Definition~\ref{defn:PReLU} then, setting each $\alpha=0$ yields
    \begin{equation}
    \nonumber
    \begin{aligned}
    J
    \le &
        m\left(
            19 + 2n  + 
            11
            \Big\lceil
            \Big(
                \frac{2r_ {\mathcal{K}} }{
                \omega^{-1}\big(
                    \varepsilon/(131 \sqrt{n\,m})
                \big)
                }
            \Big)^{n/2}
            \Big\rceil
        \right)
    \\
    \,\mbox{ and }\,
        \max_{1 \le j \le J^{(i)}} d_j 
    \le &
        16 \, 
        \max\{n, 3\}
        +
        m
.
\end{aligned}
\end{equation}
\end{enumerate}
Incorporating the definition of $r_ {\mathcal{K}} $ in Eq.~\eqref{PROOF_prop_universal_tree__good_cube_via_Jungs_Radius} and employing the inequality $\operatorname{diam}( {\mathcal{K}} )\le 2 \operatorname{rad}( {\mathcal{K}} )$ completes the proof.
\end{proof}

\begin{lemma}[Trade-Off: No.\ Expert vs. Expert Complexity]
\label{lem:technicalversion_max}
Let $ {\mathcal{K}} $ be a compact subset of $\rr^n$ whose doubling number is $C$ and a uniformly continuous map $f: {\mathcal{K}} \to \rr^m$ with modulus of continuity $\omega$.

\noindent Fix $v\in \mathbb{N}$ with $v\ge 2$, $0<\delta \le \operatorname{rad}( {\mathcal{K}} )$, and $\varepsilon>0$.  Suppose that $\sigma$ satisfies Definition~\ref{defn:PReLU}.
There exists a $p\in \mathbb{N}_+$ and a $v$-ary tree $\mathcal{T}\eqdef(V,E)$ with leaves $\mathcal{L}\eqdef \{(v_i,\theta_i)\}_{i=1}^L \subseteq  {\mathcal{K}} \times\rr^p$ satisfying
\begin{equation}
\label{eq:packing_condition_II}
    \max_{x\in  {\mathcal{K}} }\,
        \min_{(v_i,\theta_i)\in \mathcal{L}}\,
            \|
                    x
                -
                    v_i
            \|
        <
            \delta
.
\end{equation}
Furthermore, for each $x\in  {\mathcal{K}} $ and each $i=1,\dots,L$, if $\|x-v_i\|<\delta$ then
\[
    \|
            f(x)
        -
            f_{\theta_i}(x)
    \|
    <
        \varepsilon
    .
\]
We have the following estimates:
\begin{enumerate}
    \item[(i)] \textbf{Depth.} Depth of each $\hatf_{\theta_i}$ is 
    $%\[
    % \resizebox{.9\hsize}{!}{$
        m\left(
            19 + 2n  + 
            11
            \Big\lceil
            \Big(
                \frac{
                \delta 2^{3/2}
                n^{1/2}
                }{
                (n+1)^{1/2}
                \omega^{-1}\big(
                    \varepsilon/(131 \sqrt{n\,m})
                \big)
                }
            \Big)^{n/2}
            \Big\rceil
        \right)
    % $}
    $%\]
\item[(ii)] \textbf{Width.} Width of each $\hatf_{\theta_i}$ is $16 \, \max\{n, 3\} + m$
\item[(iii)] \textbf{Leaves}: at most $
        L 
    =
        v^{
            \big\lceil
                c\,\log(C)\big( 
                            1 
                        + 
                            \log(\delta^{-1}\operatorname{diam}( {\mathcal{K}} ))
                    \big)
            \big\rceil
        ,
        }$
    \item[(iv)] \textbf{Height}: $\big\lceil
                c\,\log(C)\big( 
                            1 
                        + 
                            \log(\delta^{-1}\operatorname{diam}( {\mathcal{K}} ))
                    \big)
            \big\rceil$,
    \item[(v)] \textbf{Nodes}: At most $
        \frac{v^{
            \lceil
                c\,\log(C)( 
                            1 
                        + 
                            \log(\delta^{-1}\operatorname{diam}( {\mathcal{K}} ))
                    )
            \rceil + 1
        } - 1}{
        \big\lceil
                c\log(C)\big( 
                            1 
                        + 
                            \log(\delta^{-1}\operatorname{diam}( {\mathcal{K}} ))
                    \big)
            \big\rceil - 1
        }
    $
\end{enumerate}
where $c\eqdef \log(v)^{-1}$.  
\end{lemma}

\begin{proof}[{Proof of Lemma~\ref{lem:technicalversion_max}}]
Consider the tree $\tilde{\mathcal{T}}$ given by Lemma~\ref{lem:complete_n_ary_tree_counting}.  For each leaf $v_i$ of $\tilde{\mathcal{T}}$, we apply Lemma~\ref{lem_universal_approximation_improved_rates}
to deduce the existence of an MLP $\hatf_{\theta_i}$ with explicit depth and width estimates given by that lemma, satisfying the uniform estimate
\begin{equation}
\label{eq:unif_estimate}
    \max_{\|x-v_i\|\le \delta}\,
        \|f(x)-\hatf_{\theta_i}(x)\|
    <
        \varepsilon
    .
\end{equation}
Let $\mathcal{T}$ be the same tree as $\tilde{\mathcal{T}}$ with leaves identified with $\{(v_i,\theta_i)\}_{i=1}^L$.  
\end{proof}
We now prove our main technical version, namely Theorem~\ref{thm:Main__TechnicalVersion}, which directly implies the special case recorded in Theorem~\ref{thrm:Main}.  

\begin{proof}[{Proof of Theorem~\ref{thm:Main__TechnicalVersion} (and this Theorem~\ref{thrm:Main})}]
Applying Lemma~\ref{lem:technicalversion_max} with $\delta$ given as the solution of 
\begin{equation}
\label{eq:implicit_deltavalue}
\Big(\frac{\delta 2^{3/2}n^{1/2}}{(n+1)^{1/2}\omega^{-1}(\varepsilon/131\,(nm)^{1/2})} \Big)^{n/2}  \le \Big(\frac{\delta 2^{3/2}n^{1/2}}{(2n^{1/2}\omega^{-1}(\varepsilon/131\,(nm)^{1/2})} \Big)^{n/2} = \varepsilon^{-r}
.
\end{equation}
Solving Eq.~\eqref{eq:implicit_deltavalue} for $\delta$ implies that it is given by
\[
        \delta 
    =   
        \frac{
            \varepsilon^{-2r/n}% \vee \operatorname{rad}(\mathcal{K})
        }{2}
        \,
        \omega^{-1}\biggl(
            \frac{\varepsilon}{
            131 \, (nm)^{1/2}
            }
        \biggr)  
.
\]
This completes the proof.
\end{proof}

\begin{proof}[{Proof of Theorem~\ref{thrm:Main}}]
Setting ${\mathcal{K}} = B_n(0,1)$, $r=1/2$, $\omega(t)=Lt$, and thus $\omega^{-1}(t)=L^{-1}\,t^{1/\alpha}$, in Theorem~\ref{thm:Main__TechnicalVersion} yields the conclusion.  
Finally, by the computation in Example~\ref{ex:doubling_Ball__Euclidean}, we have that $C\le 2^{n+1}$; thus, $\log(C)=(n+1)\log(2)
% skipped step
% = 
% (n+1)\,1
\le 2n$.  Noting that $c=1/\log(2)=1$ completes the proof.
\end{proof}

% \begin{remark}
% \label{rem:ConstantHiddenUnderBigO__PreviouslyCf}
% The constant hidden under the big $\mathcal{O}$ in Eq.~\eqref{eq:packing_condition__Main} is $%C_f\eqdef$ 
% 1+ \max\{1,\log (L^{1/\alpha}\, 262 (nm)^{1/(2\alpha)} )\}$.
% \end{remark}

\subsection{{Proofs of  Theorem~\ref{thrm:Bounded_VC__NeuralPathways}}}
\label{a:Proofs_VC_DimensionResults}

We use the following lemma and its proof due to \textit{Gabriel Conant}.  
\begin{lemma}[{\cite{UpperBoundVCPartitioning_MO}}]
\label{lem:VCParitioning}
Fix $n,L,d\in \mathbb{N}_+$.  
Let $\mathcal{H}$ be a non-empty set of functions from $\mathbb{R}^n$ to $\{0,1\}$ of VC dimension at-most $d$.  
Let $\mathcal{C}_L$ be the set of all ordered partitions (Voronoi diagrams) $(C_l)_{l=1}^{\tilde{L}}$ covering $\mathbb{R}^n$, where $\tilde{L}\le L$, and for which there exist \textit{distinct} $p_1,\dots,p_{\tilde{L}}\in \mathbb{R}^n$ such that: for each $l=1,\dots,\tilde{L}$
\begin{equation}
\label{eq:paritioning_Lemma}
\begin{aligned}
        C_l & 
    \eqdef 
        \tilde{C}_l \setminus \bigcup_{s<l}\, \tilde{C_s}
\\ 
        \tilde{C}_l & 
    \eqdef 
        \{x\in \mathbb{R}^n:\, \|x-p_l\|=\min_{s=1,\dots,\tilde{L}}\, \|x-p_s\|\} 
.
\end{aligned}
\end{equation}
Let $\mathcal{H}_{L}$ be the set of functions from $\mathbb{R}^n$ to $\{0,1\}$ of the form
\[
        f
    =
        \sum_{l=1}^{\tilde{L}}
        \,
        f_l\,I_{C_l}
\]
where $\tilde{L}\in \mathbb{N}_+$ with $\tilde{L}\le L$, $f_1,\dots,f_{\tilde{L}}\in \mathcal{H}$, and $(C_l)_{l=1}^{\tilde{L}} \in \mathcal{C}_L$.  Then, the VC dimension of $\mathcal{H}_L$ satisfies
\[
        \operatorname{VC}(
            \mathcal{H}_L
        )
    \le 
        8
        L
        \log(\max\{2,L\})^2
        \,
        \big(
            \max\{d
                ,
                2(n+1)\, (L-1)\, \log(3L - 3)
            \}
        \big) 
.
\]
\end{lemma}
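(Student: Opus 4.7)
My plan is a two-level Sauer--Shelah / VC composition argument: first bound the combinatorial complexity of the underlying Voronoi partition structure, and then combine it with the per-cell labeling complexity contributed by $\mathcal{H}$.

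\textit{Step 1 (Voronoi cell complexity).} First I would observe that each cell $C_l$ in \eqref{eq:paritioning_Lemma} coincides, up to a measure-zero boundary produced by the recursive set-difference that breaks ties, with the intersection of at most $L-1$ halfspaces. Indeed, each defining inequality $\|x-p_l\|^2 \le \|x-p_s\|^2$ expands to the affine constraint $2(p_s-p_l)\cdot x \le \|p_s\|^2 - \|p_l\|^2$. Since halfspaces in $\mathbb{R}^n$ have VC dimension $n+1$, the Blumer--Ehrenfeucht--Haussler--Warmuth (BEHW) bound on the VC dimension of $k$-fold intersections yields an upper bound of $D \eqdef 2(n+1)(L-1)\log(3L-3)$ on the VC dimension of the class of cells appearing in $\mathcal{C}_L$; note that this is exactly the second argument of the $\max$ in the conclusion.

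\textit{Step 2 (decomposition of $\mathcal{H}_L$ into unions of intersections).} I then rewrite the positive region of each $f = \sum_l f_l I_{C_l} \in \mathcal{H}_L$ as the $L$-fold union
\[
\{f = 1\} \;=\; \bigcup_{l=1}^{\tilde L}\,\bigl(C_l \cap \{f_l = 1\}\bigr),
\]
so that $\mathcal{H}_L$ embeds into the class of unions of $L$ pairwise intersections between a Voronoi cell (VC dimension $\le D$) and a member of $\mathcal{H}$ (VC dimension $\le d$).

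\textit{Step 3 (Sauer--Shelah counting).} Suppose $\mathcal{H}_L$ shatters some $S\subseteq \mathbb{R}^n$ with $|S|=m$. Every dichotomy of $S$ is determined by (i) the partition pattern of $S$ induced by an element of $\mathcal{C}_L$, and (ii) an $\mathcal{H}$-labeling on each cell. Applying Sauer--Shelah to the cell class from Step~1 bounds the number of indicator patterns of a single cell by $(em/D)^D$; taking an $L$-fold product overcounts partition patterns by at most $(em/D)^{DL}$. For any fixed partition with cell sizes $m_1,\dots,m_{\tilde L}$, Sauer--Shelah on $\mathcal{H}$ bounds per-cell labelings by $\prod_l (em_l/d)^d$, and AM--GM applied to $\sum m_l = m$ bounds this by $(em/(Ld))^{Ld}$. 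Combining the two counts gives the implicit inequality
\[
2^m \;\le\; (em/D)^{DL} \cdot (em/(Ld))^{Ld}.
\]

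\textit{Step 4 (solving for $m$).} Taking $\log_2$ yields a bound of the form $m \lesssim L\log(m)\cdot \max\{d, D\}$, which by the elementary estimate ``$x \le a\log(bx) \Rightarrow x \le 2a\log(2ab)$'' produces the claimed bound. One $\log L$ factor arises from the $L$-fold union in Step~2 (composition cost) and a second from the implicit-inequality solve, giving the $\log(\max\{2,L\})^2$ factor; the $\max\{2,L\}$ simply guards the degenerate case $L=1$ where $\log L = 0$.

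\textit{Main obstacle.} The conceptual structure is routine; the main difficulty is quantitative. Recovering exactly the prefactor $8$ and the precise form $\log(\max\{2,L\})^2$ requires choosing the right sharp variants of BEHW and of the implicit-inequality lemma, and carefully folding $D$ into the outer $\max$ so that stray $\log$ or constant factors do not leak outside. A further subtle point is that in Step~3 the $L$ cells are jointly parameterized by a single prototype tuple $(p_1,\dots,p_L)\in (\mathbb{R}^n)^L$ rather than being independent; I would conservatively bound the partition count by the independent product, which is the standard source of looseness in such arguments but is adequate for the stated bound. The measure-zero boundary from the tie-breaking in \eqref{eq:paritioning_Lemma} has no effect, since it cannot change indicator patterns on a finite set.
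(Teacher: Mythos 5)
Your decomposition is the same as the paper's: you expand $\|x-p_l\|\le\|x-p_s\|$ into affine constraints so that each Voronoi cell is an intersection of at most $L-1$ halfspaces (VC dimension $n+1$ each), and you rewrite the positive region of $f=\sum_l f_l I_{C_l}$ as the $L$-fold union $\bigcup_l (C_l\cap\{f_l=1\})$. Where you diverge is the final combination step. The paper treats the whole thing by three black-box applications of the Blumer--Ehrenfeucht--Haussler--Warmuth composition lemma (halfspaces $\to$ cells; cells $\cap\,\mathcal{H}$; then the $L$-fold union), which is what produces the explicit constants $4\log 6$ and $2\cdot L\log(3L)$ multiplying out to the stated $8L\log(\max\{2,L\})^2$. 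You instead inline the Sauer--Shelah counting that underlies that lemma. Both routes are legitimate; the paper's is shorter because the bookkeeping is delegated to the cited lemma, while yours is more self-contained.

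Two concrete issues with your version. First, Step 4 as written does not deliver the stated bound: from $2^m\le (em/D)^{DL}(em/(Ld))^{Ld}$ the naive solve via ``$x\le a\log(bx)\Rightarrow x\le 2a\log(2ab)$'' with $a=L\max\{d,D\}$ yields $m\lesssim L\max\{d,D\}\log(L\max\{d,D\})$, which carries a $\log d$ factor absent from the target $8L\log(\max\{2,L\})^2\max\{d,D\}$; for $d\gg L$ your bound is strictly weaker. The leak is avoidable --- the key is that the growth function is evaluated at $m/D$ and $m/(Ld)$ respectively, so the relevant logarithms are of $m/\max\{d,D\}\approx L\log(L)^2$, not of $m$ itself (this cancellation is exactly what the BEHW lemma packages up) --- but your sketch skips the step where it happens, and it is the only nontrivial part of the computation. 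Second, the tie-breaking in Eq.~\eqref{eq:paritioning_Lemma} is not dismissible as ``measure zero'': a point of a candidate shattered set may lie exactly on a bisector hyperplane, and then its cell assignment does change the realized dichotomy. The correct observation is that each $C_l$ remains an intersection of at most $L-1$ halfspaces, some open (for $s<l$) and some closed (for $s>l$), and the class of all open and closed halfspaces in $\mathbb{R}^n$ still has VC dimension $n+1$, so Step 1 goes through unchanged.
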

\begin{proof}[{Proof of Lemma~\ref{lem:VCParitioning}}]
Let us first fix our notation.  
Each $x_0\in \mathbb{R}^n\setminus\{0\}$ and $t\in \mathbb{R}$ defines a \textit{halfspace} in $\mathbb{R}^n$ given by $HS_{x_0,t}\eqdef \{x\in \mathbb{R}^n:\, \langle x_0,x\rangle \le t\}$ (see \citep[Section 2.2.1]{BoydVandenberghe_ConvexOptim_Book_2004} for details).  We denote set of all halfspaces in $\mathbb{R}^n$ by $\mathcal{HS}_n\eqdef \{HS_{x_0,t}:\,\exists x_0\in \mathbb{R}^n\setminus\{0\}\,\exists t\in \mathbb{R}\}$.
Consider the set $\mathcal{C}(L)$ of all $C \subseteq \mathbb{R}^n$ of the form
\begin{equation}
\label{eq:intersection_of_halfspaces}
        C
    =
        \bigcap_{l=1}^{\tilde{L}-1}
        \,
        X_l
\end{equation}
for some positive integer $2\le \tilde{L}\le \max\{2,L\}$ and $X_1,\dots,X_{\tilde{L}-1}\in \mathcal{HS}_n$.
%%------------------------------------------------------------------------------------------------------%%
% \hfill\\
\paragraph{Step 1 - Reformulation as Set of Sets}
\hfill\\
%%------------------------------------------------------------------------------------------------------%%
By definition of the powerset $2^{\mathbb{R}^n}$ of the set $\mathbb{R}^n$, each subset $A\subseteq \mathbb{R}^n$ can be identifies with a function (classifier) from $\mathbb{R}^n$ to $\{0,1\}$ via the bijection mapping any $X\in 2^{\mathbb{R}^n}$ to the binary classifier $I_X$ (i.e.\ the indicator function of the set $X$).
Using this bijection, we henceforth identify both $\mathcal{H}$ and $\mathcal{H}_L$ with subsets of the powerset $2^{\mathbb{R}^n}$.

Under this identification, the class $\mathcal{H}_L$ can be represented as the collection of subsets $X$ of $\mathbb{R}^n$ of the form
\begin{equation}
\label{eq:SetOfSetsDescription}
        X
    =
        \bigcup_{l=1}^{\tilde{L}}
        \,
        H_l\cap C_l
,
\end{equation}
where $\tilde{L}\in \mathbb{N}_+$ satisfies $\tilde{L}\le L$, and for each $l=1,\dots,\tilde{L}$ we have $H_l\in \mathcal{H}$ and $(C_l)_{l=1}^{\tilde{L}}\in \mathcal{C}_L$ is of the form~\eqref{eq:paritioning_Lemma} for some \textit{distinct} points $p_1,\dots,p_{\tilde{L}}\in \mathbb{R}^n$.
%%------------------------------------------------------------------------------------------------------%%
% \hfill\\
\paragraph{Step 2 - VC Dimension of Voronoi Diagrams with at-most $L$ Cells}
\hfill\\
%%------------------------------------------------------------------------------------------------------%%
An element of $(C_l)_{l=1}^{\tilde{L}}$ of $\mathcal{C}_L$ is, by definition, a Voronoi diagram in $\mathbb{R}^n$ and thus, \citet[Exercise 2.9]{BoydVandenberghe_ConvexOptim_Book_2004} implies that each $C_1,\dots,C_{\tilde{L}}$  is the intersection of $\tilde{L}-1\le L-1$ halfspaces; i.e.~$C_1,\dots,C_{\tilde{L}}\in \mathcal{C}(L)$ (see Eq.~\eqref{eq:intersection_of_halfspaces}).  
Since $\mathcal{C}_L = \{\cap_{l=1}^{\tilde{L}}\, H_i:\, \exists \tilde{L}\in \mathbb{N}_+\, H_1,\dots,H_{\tilde{L}}\in \mathcal{HS}_n\, \tilde{L}\le L\}$ then \citet[Lemma 3.2.3]{blumer1989learnability} implies that
\begin{equation}
\label{eq:VCDimension_VoronoiCells_atmostL_Parts}
\begin{aligned}
        \operatorname{VC}(\mathcal{C}_L)
    \le &
        2\operatorname{VC}(\mathcal{HS}_n)\, (L-1)\, \log(3L - 3)
\\
    \le & 
        2(n+1)\, (L-1)\, \log(3L - 3)
;
\end{aligned}
\end{equation}
the second inequality in Eq.~\eqref{eq:VCDimension_VoronoiCells_atmostL_Parts} holds since $\operatorname{VC}(\mathcal{HS}_n) = n+1$ by~\citet[Theorem 9.3]{shalev2014understanding}.  
%%------------------------------------------------------------------------------------------------------%%
% \hfill\\
\paragraph{Step 3 - VC Dimension of The Class $\mathcal{H}_L$}
\hfill\\
%%------------------------------------------------------------------------------------------------------%%
Define $\mathcal{H}\cap \mathcal{C}(L)\eqdef \{H\cap C:\, H\in \mathcal{H}\mbox{ and } C\in \mathcal{C}(L)\}$.  Again using \citet[Lemma 3.2.3]{blumer1989learnability}, we have
\begin{equation}
\label{eq:intesection_bound}
\begin{aligned}
        \operatorname{VC}\big(\mathcal{H}\cap \mathcal{C}(L)\big)
    \le &
        2
        \, 
        \big(
            \max\{\operatorname{VC}(\mathcal{H}),\operatorname{VC}(\mathcal{C}(L))\}
        \big)
        \,
        2
        \,
        % \log(3 \times 2)
        \log(6)
\\
    \le &
        4
        \log(6)
        \, 
        \big(
            \max\{\operatorname{VC}(\mathcal{H}),\operatorname{VC}(\mathcal{C}(L))\}
        \big)
\\
    \le &
        4
        \log(6)
        \, 
        \big(
            \max\{d
                ,
                2(n+1)\, (L-1)\, \log(3L - 3)
            \}
        \big)   
.
\end{aligned}
\end{equation}
Consider the set $\tilde{\mathcal{H}}\eqdef \{\cup_{l=1}^{\tilde{L}}\,H_l:\, \tilde{L}\in \mathbb{N}_+,\,\tilde{L}\le L,\,  \forall l=1,\dots, \tilde{L} ,\, H_l\in \mathcal{H}\cap \mathcal{C}(L)\}$.  Applying \citet[Lemma 3.2.3]{blumer1989learnability}, one final time yields
\begin{align}
\label{eq:union_bound__BEGIN}
        \operatorname{VC}\big(\tilde{\mathcal{H}}\big)
    \le &
        2
        \operatorname{VC}\big(\mathcal{H}\cap \mathcal{C}(L)\big)
        \, 
        L
        \log(3L)
\\
\label{eq:union_bound}
    \le &
        2
        \Big(
            4
            \log(6)
            \, 
            \big(
                \max\{d
                    ,
                    2(n+1)\, (L-1)\, \log(3L - 3)
                \}
            \big) 
        \Big)
        \, 
        L
        \log(3L)
\\
\label{eq:union_bound__END}
    \le &
        8
        L
        \log(\max\{2,L\})^2
        \,
        \big(
            \max\{d
                ,
                2(n+1)\, (L-1)\, \log(3L - 3)
            \}
        \big) 
\end{align}
% \TF{After the first and second above inequality, there are extra $\mathrm{VC}$, should be removed.}
where Eq.~\eqref{eq:union_bound} held by the estimate in Eq.~\eqref{eq:intesection_bound}.  Finally, since $\operatorname{VC}(A)\le \operatorname{VC}(B)$ whenever $A\subseteq B$ for any set $B$ then since $\mathcal{H}_L\subseteq \tilde{\mathcal{H}}$ then Eq.~\eqref{eq:union_bound__BEGIN}-\eqref{eq:union_bound__END} yields the desired conclusion.
\end{proof}

We may now derive Theorem~\ref{thrm:Bounded_VC__NeuralPathways} by merging Lemma~\ref{lem:VCParitioning} and one of the main results of \citet{bartlett2019nearly}.
\begin{proof}[{Proof of Theorem~\ref{thrm:Bounded_VC__NeuralPathways}}]
Let $n,J,W,L\in \mathbb{N}_+$ and consider the (non-empty) set of real-valued functions $\mathcal{NN}_{J,W:n,1}^{\operatorname{PReLU}}$.
By definition of the VC dimension of a set of real-valued functions, given circa Eq.~\eqref{eq:real_valued_shattering}, we have 
\begin{align}
\nonumber
            \operatorname{VC}\big(\mathcal{NN}_{J,W:n,1}^{\operatorname{PReLU}}\big) 
        \eqdef  &
            \operatorname{VC}\big(I_{(0,\infty)}\circ \mathcal{NN}_{J,W:n,1}^{\operatorname{PReLU}}\big) 
\\
\label{eq:VC_NeuralPathwaysReminder}
            \operatorname{VC}\big(\mathcal{NP}_{J,W,L:n,1}^{\operatorname{PReLU}}\big) 
        \eqdef  &
            \operatorname{VC}\big(I_{(0,\infty)}\circ \mathcal{NP}_{J,W,L:n,1}^{\operatorname{PReLU}}\big) 
.
\end{align}
By \citet[Theorem 7]{bartlett2019nearly}, we have that
\begin{equation}
\label{eq:VCDim_NNs}
        \operatorname{VC}(I_{(0,\infty)}\circ \mathcal{NN}_{J,W:n,1}^{\operatorname{PReLU}})
    \le 
        D^{\star}
    \eqdef 
        \big\lceil
            J+(J+1)\,W^2 
            \, 
            \log_2\big(
                e\,4 (J+1) \, W
                    \log_2(
                        e2(J+1)W
                    )
            \big)
        \big\rceil
.
\end{equation}
Therefore, applying Lemma~\ref{lem:VCParitioning} with $\mathcal{H}=  \big(I_{(0,\infty)}\circ \mathcal{NN}_{J,W:n,1}^{\operatorname{PReLU}}\big)$
yields the estimate 
\begin{equation}
\label{eq:Lemma_plus_Bartlett}
        \operatorname{VC}\big(I_{(0,\infty)}\circ \mathcal{NP}_{J,W,L:n,1}^{\operatorname{PReLU}}\big)
    \le 
        8
        L
        \log(\max\{2,L\})^2
        \,
        \big(
            \max\{D^{\star}
                ,
                2(n+1)\, (L-1)\, \log(3L - 3)
            \}
        \big) 
\end{equation}
Combining Eq.~\eqref{eq:Lemma_plus_Bartlett} and the definition~\eqref{eq:VC_NeuralPathwaysReminder} yields the bound.  In particular,
\[
        \operatorname{VC}\big( \mathcal{NP}_{J,W,L:n,1}^{\operatorname{PReLU}}\big)
    \in 
        \mathcal{O}\big(
            L\log(L)^2
            \,
            \max\{
                    nL\log(L)
                ,
                    %% VC (order) dimension MLP
                    JW^2\log(JW)      
            \}
        \big)
\]
yielding the second conclusion.
\end{proof}

\subsection{{Proof of Proposition~\ref{prop:Unbounded_VC__SuperExpressive}}}

\begin{proof}%[{Proof of Proposition~\ref{prop:Unbounded_VC__SuperExpressive}}]
We argue by contradiction.  
Suppose that $\mathcal{F}$ has finite VC dimension $\operatorname{VC}(\mathcal{F})$.  
Then, \citet[Theorem 2.4]{shen2022optimal} implies that there exists a $1$-Lipschitz map $f:[0,1]^n\to \mathbb{R}$ such that does not exist a \textit{strictly positive} $\varepsilon\in (0,\operatorname{VC}(\mathcal{F})^{-1/n}/9)$ satisfying
such that
\begin{equation}
\label{eq:PRF_prop:Unbounded_VC__SuperExpressive___ContradictionSetup}
        \inf_{\hatf\in \mathcal{F}}
        \,
        \sup_{x\in [0,1]^n}\, |\hatf(x)-f(x)| 
    \le 
        \varepsilon
.
\end{equation}
However, \citet[Theorem 1]{shen2022deep_JMLR} implies that, for every $1$-Lipschitz function, in particular for $f$, and for each $\tilde{\varepsilon}>0$ there exists a $\hatf_{\tilde{\varepsilon}}\in \mathcal{F}$ satisfying
\begin{equation}
\label{eq:PRF_prop:Unbounded_VC__SuperExpressive___Contradiction}
        \sup_{x\in [0,1]^n}\, |\hatf_{\tilde{\varepsilon}}(x)-f(x)| 
    \le 
        \tilde{\varepsilon}
.
\end{equation}
Setting $\tilde{\varepsilon}= \operatorname{VC}(\mathcal{F})^{-1/n}/18$ yields a contradiction as Eq.~\eqref{eq:PRF_prop:Unbounded_VC__SuperExpressive___ContradictionSetup} and Eq.~\eqref{eq:PRF_prop:Unbounded_VC__SuperExpressive___Contradiction} cannot both be simultaneously true.  Therefore, $\mathcal{F}$ has infinite VC dimension.
\end{proof}

\section{The Curse of Irregularity}
\label{a:COI}
We now explain why learning H\"{o}lder functions of low regularity ($(1,1/d)$-H\"{o}lder) functions on the real line segment $[0,1]$ is equally challenging as learning regular functions ($1$-Lipschitz) on $[0,1]^n$.

\subsection{H\"{o}lder Functions}
\label{a:COI__HolderExplained}
Fix $n,m\in \mathbb{N}$ and let $\mathcal{X}\subset \mathbb{R}^n$ be non-empty and compact of diameter $D$.  Fix $0<\alpha \le 1$, $L\ge 0$, and let $f:\mathcal{X}\rightarrow \mathbb{R}^m$ be $(\alpha,L)$-H\"{o}lder continuous, meaning
\[
        \|
            f(x)-f(y)
        \|
    \le
        \,
        L
        \|
            x - y 
        \|^{\alpha}
\]
holds for each $x,y\in \mathcal{X}$.  For any $L\ge 0$ and $0<\alpha\le 1$, we denote set of all $(\alpha,L)$-H\"{o}lder functions from $\mathcal{X}$ to $\mathbb{R}^n$ is denoted by $C^{\alpha}([0,1]^n,\mathbb{R};L)$. 

We focus on the class of locally H\"{o}lder functions since they are generic, i.e.\ universal, amongst all continuous functions by the Stone–Weierstrass theorem.  
In this case, H\"{o}lder functions are sufficiently rich to paint a full picture of the hardness to approximate arbitrary H\"{o}lder functions either by MLPs against the proposed model.  

In contrast to smaller generic function classes, such as polynomials, H\"{o}lder functions provide more freedom in experimentally visualizing our theoretical results.  
This degree of freedom is the parameter $\alpha$, which modulates their \textit{regularity}. As $\alpha$ tends to $0$ the H\"{o}lder functions become complicated and when $\alpha=1$ the Rademacher-Stephanov theorem, see \citep[Theorems 3.1.6 and 3.1.9]{Federer_GeometricMeasureTheory_1978} characterizes Lebesgue almost-everywhere differentiable functions as locally $(1,L)$-H\"{o}lder maps.  Note that $(1,L)$-H\"{o}lder functions are also called $L$-Lipschitz maps and, in this case, $L=\sup_{x}\,\|\nabla f(x)\|_{op}$ where the supremum is taken over all points where $f$ is differentiable and where $\|\cdot\|_{op}$ is the operator norm. 

\subsection{The Curse of Irregularity}
\label{a:COI__COIExplained}
The effect of \textit{low H\"{o}lder regularity}, i.e. when $\alpha\approx 0$, has the same effect as high-dimensionality on the approximability of arbitrary $\alpha$-H\"{o}lder functions.  
This is because any real-valued model/hypothesis class $\mathcal{F}_1$ of functions on $\mathbb{R}$ approximating an arbitrary $(\frac1{d},1)$-H\"{o}lder functions  By \citep[Theorem 2.4]{shen2022optimal}, we have the lower minimax bound: if for each $\varepsilon>0$ we have \textit{``the curse of irregularity''}
\begin{equation}
\label{eq:approximability__Regularity}
% \resizebox{.9\hsize}{!}{$
        \underset{f
        }{\sup}
        \,
        \underset{\hatf\in \mathcal{F}_1}{\inf}
        \,
        \underset{0\le x\le 1}{\sup}
        \,
                |f(x)-\hatf(x)|
        \le \varepsilon
    \Rightarrow
        \operatorname{VC}(\mathcal{F}_1) \in \Omega(\varepsilon^{-d})
% $}
\end{equation}
where the supremum is taken over all $f \in C^{1/d}([0,1],\mathbb{R};1)$.
The familiar curse of dimensionality also expresses the hardness to approximate an arbitrary $1$-Lipschitz ($(1,1)$-H\"{o}lder), thus relatively regular, function on $[0,1]^n$.  As above, consider any model/hypothesis class $\mathcal{F}_2$ of real-valued maps on $\mathbb{R}^d$ then, again using \citep[Theorem 2.4]{shen2022optimal}, one has the lower-bound
\begin{equation}
\label{eq:approximability__Dimensionality}
% \resizebox{.9\hsize}{!}{$
        \underset{f
        }{\sup}
        \,
        \underset{\hatf\in \mathcal{F}_2}{\inf}
        \,
        \underset{x\in [0,1]^n}{\sup}
        \,
                \|f(x)-\hatf(x)\|
        \le \varepsilon
    \Rightarrow
        \operatorname{VC}(\mathcal{F}_2) \in \Omega(\varepsilon^{-d})
% $}
\end{equation}
where the supremum is taken over all $f\in C^{1}([0,1]^n,\mathbb{R};1)$.  
Comparing Eq.~\eqref{eq:approximability__Regularity} and Eq.~\eqref{eq:approximability__Dimensionality}, we find that the difficulty of uniformly approximating an arbitrary \textit{low-regularity} ($(\frac1{d},1)$-H\"{o}lder) function on a $1$-dimensional domain is roughly just as complicated as approximating a relatively regular ($1$-Lipschitz) function on a \textit{high-dimensional} domain.  

Incorporating these lower bounds with the lower-bound in Eq.~\eqref{eq:VCDim_MLPs}, we infer that the minimum number layers $(L)$ and minimal width $(W)$ of each MLP approximating a low-regularity function on the low-dimensional domain $[0,1]$ is roughly the same as the minimal number of layers and width of an MLP approximating a high-regularity map on the high-dimensional domain $[0,1]^n$.

\section{{Does one Need Overparameterized Experts if there are Enough Experts?}}
\label{a:Experiments}

We evaluate our approach in two standard machine learning tasks: regression and classification. We experimentally show that MoMLPs, which distribute predictions over multiple neural networks, are competitive with a single large neural network containing as many model parameters as all the MoMLPs combined. This is desirable in cases where the large neural network does not fit into the memory of a single machine. On the contrary, the MoMLP model can be trained by distributing each MoMLP on separate machines (or equivalently serially on a single machine). Inference can then be performed by loading only a single MoMLP at a time into the GPU. Our source code to reproduce our results is available in the supplementary material.
\vspace{-0.15cm}
\subsection{Regression}

We first consider regression, where the goal is to approximate non-convex synthetic functions often used for performance test problems. In particular, we choose 1-dimensional H\"{o}lder functions, as well as Ackley \cite{ackley1987connectionist} and Rastrigin \cite{rastrigin1974systems} functions, whose formulations are detailed in Appendix~\ref{sec:experimental_details}.

\textbf{1D H\"{o}lder Functions.} We illustrate our primary finding, as encapsulated in Theorem~\ref{thrm:Main}, by leveraging $1$-dimensional functions characterized by very low regularity. This choice is motivated by the jagged structure inherent in such functions, necessitating an exponentially higher sampling frequency compared to smooth functions for achieving an accurate reconstruction. Indeed, this crucial sampling step forms the foundation of many quantitative universal approximation theorems \cite{yarotsky2017error,shen2021deep,kratsios2022universal}. As elaborated in Appendix~\ref{a:COI}, approximating a well-behaved (Lipschitz) function in $d$ dimensions poses a challenge equivalent to approximating a highly irregular function ($1/d$-H\"{o}lder) in a single dimension. A visual representation of a $1/d$-H\"{o}lder function is presented in Figure~\ref{fig:Holder_lowavshigha}, exemplified by the trajectory of a \textit{fractional Brownian motion} with a Hurst parameter of $\alpha = 1/d$. A formal definition is available in Appendix~\ref{a:COI}.

\begin{figure}[H]%[!t]
    \centering
    \includegraphics[width=.75\linewidth]{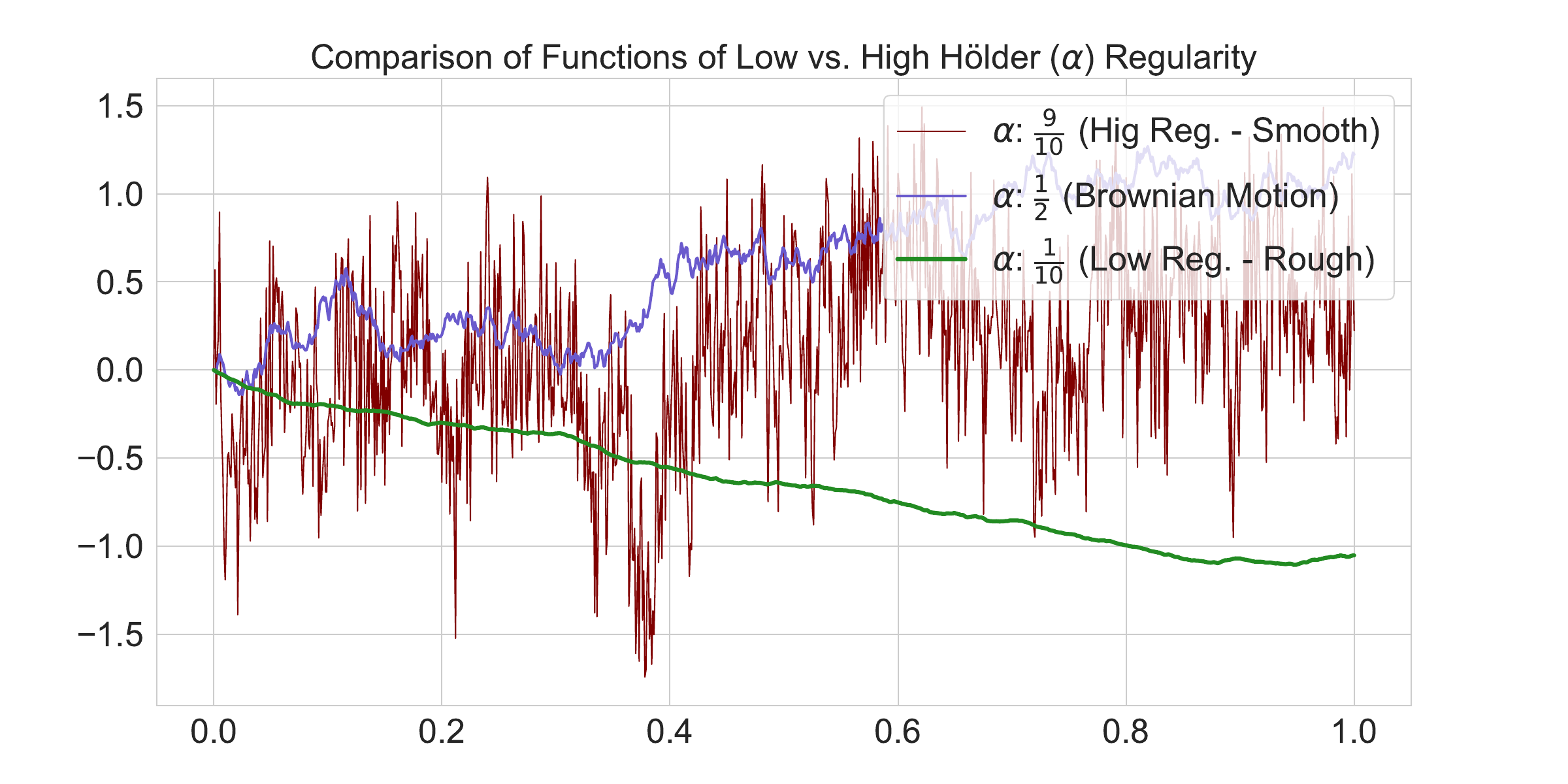} \vspace{-18pt}
    \caption{Visual Comparison of Functions with High ($\alpha\approx 1$) vs.\ Low ($\alpha\approx 0$) H\"{o}lder regularity.  If $\alpha\approx 1$, the function (green) is approximately differentiable almost everywhere, meaning it does not osculate much locally and thus is simple to approximate.  If $\alpha\approx 0$, the function may be nowhere differentiable and jagged; its extreme details make it difficult to approximate. }    
    \label{fig:Holder_lowavshigha}
    % \vspace{-0.5cm}
\end{figure}

\begin{table}[H]%[!ht]
    \caption{Test mean squared error  (average and standard deviation) for the different functions of the regression task.}
    \label{tab:regression}
    \centering \footnotesize
    \scalebox{0.}{
    \begin{tabular}{l | c c c c c} \toprule
       & 1D H\"{o}lder  & 2D Ackley & 3D Ackley & 2D Rastrigin & 3D Rastrigin  \\ \midrule
       MoMLPs (ours) & \textbf{0.057} $\pm$ \textbf{0.085} & \textbf{0.00015} $\pm$ \textbf{0.00006} & \textbf{0.00068} $\pm$ \textbf{0.00010} & \textbf{0.0480} $\pm$ \textbf{0.0073} & \textbf{1.0062} $\pm$ \textbf{0.0446} \\ 
       Baseline &  0.128 $\pm$ 0.012 &  0.08723 $\pm$  0.01059 & 0.09303 $\pm$ 0.03156 & 3.0511 $\pm$ 0.3581 & 8.0376 $\pm$ 4.0499 \\ \bottomrule
    \end{tabular} }
\end{table}

\textbf{2D and 3D Functions.} We select the Ackley and Rastrigin functions, with their respective 2D representations showcased in Figure~\ref{fig:combined_plots}, as widely recognized benchmarks in the field of optimization.

\begin{figure}[htbp!]
    \centering
    \begin{subfigure}[b]{0.21\linewidth}
        \includegraphics[width=\linewidth]{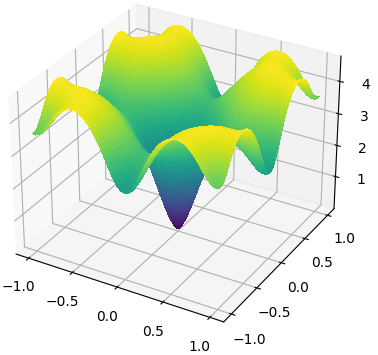}\vspace{-5pt}
        \caption{Ground Truth Ackley}
    \end{subfigure}
    \begin{subfigure}[b]{0.21\linewidth}
        \includegraphics[width=\linewidth]{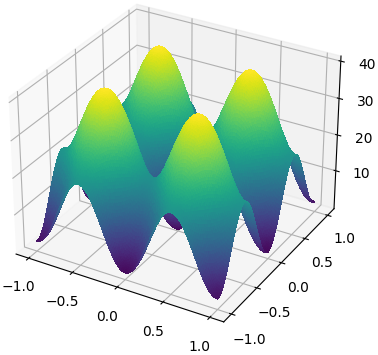}\vspace{-5pt}
        \caption{Ground Truth Rastrigin}
    \end{subfigure}

    \begin{subfigure}[b]{0.21\linewidth}
        \includegraphics[width=\linewidth]{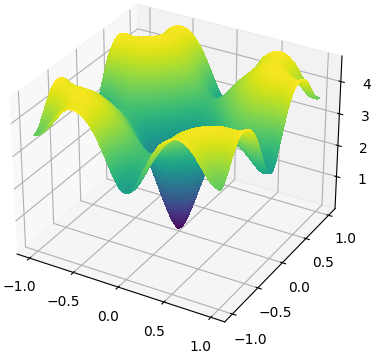} \vspace{-5pt}
        \caption{Predicted Ackley}
    \end{subfigure}
    \begin{subfigure}[b]{0.21\linewidth}
        \includegraphics[width=\linewidth]{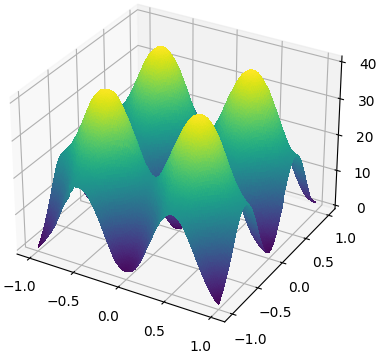} \vspace{-5pt} 
        \caption{Predicted Rastrigin}
    \end{subfigure}

    \begin{subfigure}[b]{0.21\linewidth}
        \includegraphics[width=\linewidth]{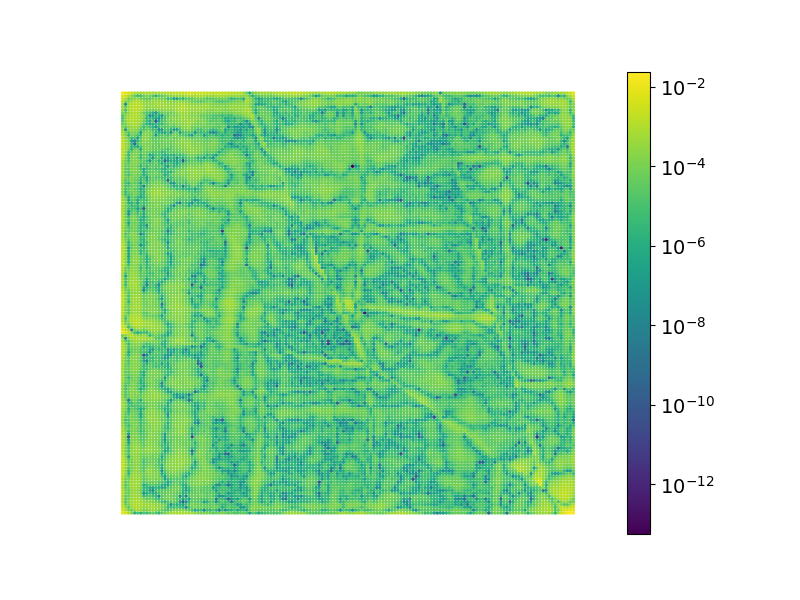}\vspace{-5pt} 
        \caption{log MSE}
    \end{subfigure}
    \begin{subfigure}[b]{0.21\linewidth}
        \includegraphics[width=\linewidth]{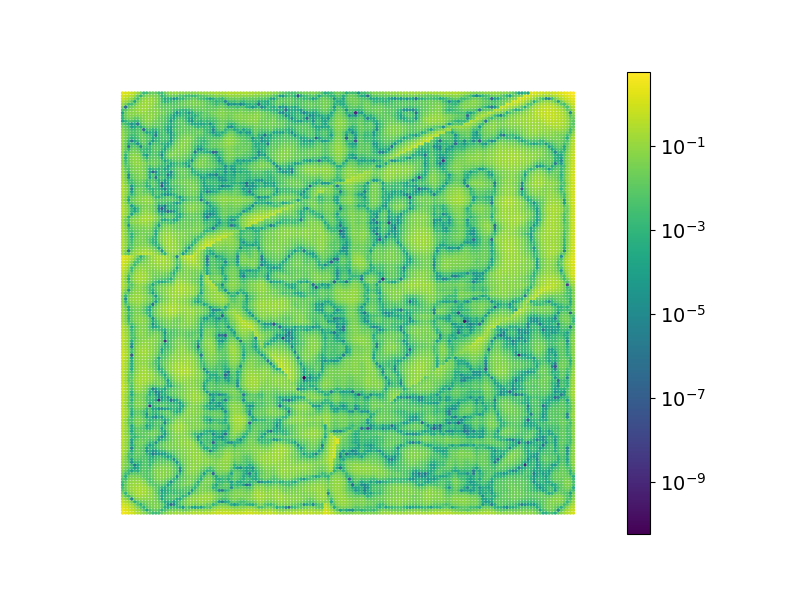}\vspace{- 5pt} 
        \caption{log MSE}
    \end{subfigure}
    \caption{Comparison of ground truth and predicted results for 2D Ackley and Rastrigin functions over the domain $[-1,1]^2$.}
    \label{fig:combined_plots}
    \vspace{-0.5cm}
\end{figure}

\textbf{Evaluation protocol.} We consider the setup where the domain of a function that we try to approximate is the $\inputdim$-dimensional closed set $[a,b]^\inputdim$. For instance, we arbitrarily choose the domain $[0,1]$ when $\inputdim = 1$, and $[-1,1]^\inputdim$ when $\inputdim \geq 2$. Our training and test samples are the $\samplesize^{\inputdim}$ vertices of the regular grid defined on $[a,b]^\inputdim$. 
At each run, $80\%$ of the samples are randomly selected for training and validation, and the remaining $20\%$ for testing.

During training, for a given and fixed set of $K$ prototypes $\prototype\eqdef(\prototype_1,\dots,\prototype_K)$, we assign each training sample $x$ to its nearest prototype $\prototype_k$ and associated neural network $\hatf_k$. We learn the prototypes as explained in Section \ref{s:Training}. For simplicity, we set the number of prototypes to $K = 4$; we also set $\samplesize = 10,000$ if $\inputdim = 1$, $\samplesize = 150$ if $\inputdim = 2$ (i.e., $150^2 = 22,500$ samples), and $\samplesize = 30$ if $\inputdim = 3$ (i.e., $27,000$ samples). More details can be found in Appendix \ref{sec:experimental_details}.

\textbf{Test performance.} In Table \ref{tab:regression}, we present the mean squared error obtained on the test set across 10 random initializations and various splits of the training/test sets. The baseline consists of a single neural network with the same overall architecture as each MoMLP but possesses as many parameters as all the MoMLPs combined. In all instances, the MoMLP model demonstrates a significant performance advantage compared to the baseline. Figure \ref{fig:combined_plots} illustrates the predictions generated by our MoMLPs, showcasing the capability of our approach to achieve a good approximation of the ground truth functions.

\vspace{-0.15cm}
\subsection{Classification}

\textbf{Datasets.} We evaluate classification on standard image datasets such as CIFAR-10 \citep{krizhevsky2010convolutional}, CIFAR-100, and Food-101 \citep{bossard2014food}, which consist of 10, 100, and 101 different classes, respectively. We use the standard splits of training/test sets: the datasets include (per category) 5,000 training and 1,000 test images for CIFAR-10, 500 training and 100 test images for CIFAR-100, and 750 and 250 for Food-101.

\textbf{Training.} Our MoMLP model takes as input latent DINOv2 encodings~\citep{oquab2023dinov2} of images from the aforementioned datasets. Each sample $x \in \mathbb{R}^{768}$ corresponds to a DINOv2 embedding (i.e., $\inputdim = 768$). Additionally, we set the prototypes as centroids obtained through the standard $K$-means clustering on the DINOv2 embedding space. The replacement of our original prototype learning algorithm is sensible in this context, as we operate within a structured latent space optimized through self-supervised learning using large-scale compute and datasets.
%\vspace{-0.3cm}
\begin{table}[!t]
    \caption{Test classification accuracy using DINOv2 features as input (average and standard deviation).}
    \label{tab:classification}
    \centering \scriptsize
    \begin{tabular}{l | c c c } \toprule
      Dataset & CIFAR-10  & CIFAR-100 & Food-101 \\ \midrule
       Ours (weighted) & 98.40 $\pm$ 0.05 & \textbf{90.01 $\pm$ 0.11} & \textbf{91.86 $\pm$ 0.10} \\ 
       Ours (unweighted) & 98.42 $\pm$ 0.04 & 89.62 $\pm$ 0.25 & 91.79 $\pm$ 0.16 \\ 
       Baseline & \textbf{98.45 $\pm$ 0.06} & 89.85 $\pm$ 0.17 & 91.45 $\pm$ 1.09 \\ \bottomrule
    \end{tabular} \vspace{-0.5cm}
\end{table}
%\vspace{-0.3cm}

Due to the potential class imbalance in the various Voronoi cells formed by the prototypes, we utilize two variations of the cross-entropy loss for each MoMLP $\hatf_k$. The first variation, termed \emph{unweighted}, assigns equal weight to all categories. The \emph{weighted} variation assigns a weight that is inversely proportional to the distribution of each category in the Voronoi cell defined by the prototype $\prototype_k$.

\textbf{Test performance.} We present the test classification accuracy over 10 random initializations of both the baseline and our MoMLPs in Table \ref{tab:classification}. The weighted version performs slightly better on datasets with a large number of categories. Our approach achieves comparable results with the baseline, effectively decomposing the prediction across multiple smaller models.

\subsection{Experimental details} \label{sec:experimental_details}

We include here experimental details, we refer to the source code in the supplementary material for more details.
We first outline the algorithm used to train the MoMLP MoE model.  We then provide details on the trained architecture and hyperparameter details in the implementation.

\subsection{Training Algorithm}
\label{s:Training}

We now provide an explanation for the training algorithm. As discussed, we mitigate down the algorithm into two parts: discovering the prototypes and training the networks. Conceptually, the prototypes define where in the input space the networks are located, or in other words, where in the input space we expect each of the networks to have the best performance. During inference, we will route a given input to the appropriate network based on its nearest prototype. In essence, each network learns to approximate a specific region of the overall input domain.

\textbf{Discovering prototypes.} In principle, we may not know how to partition the input space. One approach is to utilize standard clustering algorithms like $K$-means, but this might be suboptimal for the downstream task unless we are already operating in a structured latent space, such as those found in pre-trained models (further discussion on this is available in Section \ref{a:Experiments}). Another way is to learn it via gradient descent by optimizing the location of the prototypes for a specific task by following the gradient of the downstream loss. 
At the beginning of training, we have $\hat{F}(x)\eqdef (\hatf_1(x),\dots,\hatf_K(x))$ which contains a collection of $K$ randomly initialized shallow or small networks (i.e., much smaller than our MoMLPs described later). 
In this first step, we assume that we are able to load all randomly initialized networks into our GPU memory. In particular, this is true because we use small networks with few parameters, which we will later ``deepen'' in the next step by adding additional hidden layers. We initialize the prototypes $\prototype\eqdef(\prototype_1,\dots,\prototype_K)$ randomly from a uniform distribution within the bounds of our training dataset input samples. We use the following expression to train the location of our prototypes $\{\prototype_k\}_{k=1}^K$ in $\mathbb{R}^n$ by minimizing the energy:
\begin{equation}
           \sum_{(x,y)\in \mathcal{D}}\,
            \ell\big(
                    \operatorname{softmax}\big(
                        -\|x-\prototype_i\|_{i=1}^K
                    \big)^{\top}\, \hat{F}(x)\,
                ,
                    y
            \big)
. 
\end{equation}
where the loss $\ell$ is task-specific; for example, one could use mean squared error for regression and cross-entropy for classification. The $\operatorname{softmax}$ weights the importance of the prediction of each of the MoMLPs in $\hat{F}$ for a given input $x$, as a function of the input's distance to the prototypes, $\|x-\prototype_i\|_{i=1}^K \eqdef (\|x-\prototype_1\|,...,\|x-\prototype_K\|)$. %Note that b
Both the locations of the prototypes and the shallow randomly initialized neural networks assigned to them are optimized. %We use a learning rate an order of magnitude lower to update the networks, aiming to more strongly promote the optimization of the prototype locations.

\textbf{Deepening the Networks.} After the initial training phase, we enhance the networks by incorporating additional layers. Specifically, we introduce linear layers with weights initialized to the identity matrix and bias set to zero, just before the final output layer of each network. To encourage gradient flow in these new layers during the subsequent training stage, we slightly perturb this initialization with small Gaussian noise. This approach is driven by the fact that in the second training stage, each MoMLP can be optimized in a distributed manner. Consequently, we can work with larger networks without the need to load all of them simultaneously into our GPU, allowing for more model parameters. During the first stage, we have already optimized our networks alongside the prototype locations, converging towards a minimum. By initializing the networks with the new layers close to the identity, we can ensure that their output at the start of the second stage of training is similar to that produced by the original networks. This allows us to smoothly continue the optimization process from the point where we previously halted.

\textbf{MoMLP Training.} Once prototype locations have been fixed we can independently train MoMLPs $\hatf_1,\dots,\hatf_K$ by minimizing for all $ k \in \{ 1, \dots, K \}$:
%\vspace{-0.25cm}
\begin{equation}
\underset{\underset{k \in \argmin_{j \in \{ 1, \dots, K \}} \{ \| x - \prototype_j \| \}}{(x,y)\in \mathcal{D}}
                }{\sum}\,
                    \ell(\hatf_k(x),y)
\end{equation}
over all the networks. We optimize the MoMLP network $\hatf_k$ for training data points that are closest to prototype $\prototype_k$. The training procedure is summarized in Algorithm~\ref{alg:train_CNO}.

%\vspace{-0.25cm}

\begin{algorithm}[!t]
\caption{MoMLPs Training.} %First prototype locations are learned alongside the weights of shallow neural paths. These determine the Voronoi cells that partition the input space. Next, the prototypes are frozen and the networks are deepened. Lastly we can optimize the MoMLP corresponding to each Voronoi cell independently.}
\label{alg:train_CNO}
\begin{algorithmic}
\SetAlgoLined
\Require Training data $\mathcal{D}\eqdef \{(x_j,y_j)\}_{j=1}^N$, no. of prototypes $K\in \mathbb{N}_+$, loss function $\ell$.
\DontPrintSemicolon

\State \textbf{Discovering Prototypes:}\\

\hspace{0.3cm} \resizebox{0.38\textwidth}{!}{{\small$\displaystyle (\hat{F}, \prototype) \leftarrow {\argmin_{\hat{F},\prototype}}\,
       \displaystyle \sum_{(x,y)\in \mathcal{D}}\,
            \ell\left(
                    \operatorname{softmax}\left(
                        x|\prototype
                    \right)^{\top}\hat{F}(x)
                ,
                    y
            \right)
    $}}

\State \textbf{Deepen networks:}\\
\hspace{0.3cm} \textbf{For $k=1,\dots,K$:}\\
\hspace{0.4cm} $\hatf_k \leftarrow \textrm{deepen}(\hatf_k)$

\State \textbf{MoMLP Training:}\\
\hspace{0.3cm}\textbf{For $k=1,\dots,K$:} \\
\hspace{0.4cm} $\hatf_k \leftarrow \underset{\hatf_k}{\argmin}\,
                \underset{\underset{k \in \argmin_{j} \{ \| x - \prototype_j \| \}}{(x,y)\in \mathcal{D}}}{\sum}\,
                    \ell(\hatf_k(x),y)$

\State \Return MoMLP parameters $\{\hatf_k\}_{k=1}^K$ and prototype locations $\{\prototype_k\}_{k=1}^K$.
\end{algorithmic}
\end{algorithm}

\textbf{Inference.} At inference time, each test sample $x$ is assigned to its nearest prototype $\prototype_k$ where $\displaystyle k \in \argmin_{j \in \{ 1, \dots, K \}} \{ \| x - \prototype_j \| \}$ and the prediction is made by the $k$-th MoMLP $\hatf_k$.

\textbf{Comparison to Standard Distributed Training.} 
One can distribute the complexity of feedforward models by storing each of their layers in offline memory and then loading them sequentially into VRAM during the forward pass.  
This does avoid loading more than $\mathcal{O}(\operatorname{Width})$ active parameters into VRAM at any given time, where $\operatorname{Width}$ denotes the width of the feedforward model.  However, doing so implies that all the model parameters are ultimately loaded during the forward pass.  This contrasts with the MoMLP model, which requires $\mathcal{O}\big(
\log_2(K)\, \operatorname{Width}^2\,\operatorname{Depth}\big)$ to be loaded into memory during a forward pass; where $\operatorname{Width}$ and $\operatorname{Depth}$ are respectively the largest width and depth of the MLP at any leaf of the tree defining a given MoMLP, and $K$ denotes the number of prototypes.  However, in the forward pass, one loads $\mathcal{O}(\varepsilon^{-n/2})$ parameters for the best worst-case MLP while only $\mathcal{O}(n\log(1/\varepsilon)/\varepsilon )$ are needed in the case of the MoMLPs. The number of parameters here represents the optimal worst-case rates for both models (see Table~\ref{tab:main_techincal} and Theorem~\ref{thm:Main__TechnicalVersion}).

\subsection{Definitions of the Ackley and Rastrigin functions}

Let us note $x = (x_1, \dots, x_{\inputdim})^{\top} \in \mathbb{R}^\inputdim$ the $\inputdim$-dimensional representation of a sample, we use the following formulation of the Ackley function: 
\begin{equation}
\operatorname{Ackley}(x) = 20 + \exp(1) -a \exp\left(-b \sqrt{\frac{1}{\inputdim}\sum_{i=1}^{\inputdim} x_i^2}\right) - \exp \left( \frac{1}{\inputdim}\sum_{i=1}^{\inputdim} \cos(2 \pi x_i) \right) 
\end{equation}
where $a = 20$ and $b = 0.2$. We also use the following formulation of the Rastrigin function:
\begin{equation}
\operatorname{Rastrigin}(x) = \sum_{i=1}^{\inputdim} x_i^2 + 10 \left(n - \sum_{i=1}^{\inputdim} \cos(2 \pi x_i) \right)
\end{equation}

\subsection{Architectures and hyperparameters}

\subsubsection{MoMLPs}

We consider that the width of our MoMLPs is $w = 1000$. In other words, each hidden layer of our MoMLPs contains a linear matrix of size $w \times w$.

In the regression task, our MoMLPs contain 3 hidden layers and we use a learnable PReLU as activation function. We use Adam \cite{kingma2014adam} as optimizer with a learning rate of $10^-4$ and other default parameters of Pytorch.

In the classification task, we follow the setup of \citet{oquab2023dinov2} and then use AdamW \cite{loshchilov2018decoupled} as optimizer with a learning rate of $10^{-3}$ and other default parameters of Pytorch, our MoMLPs contain 4 hidden layers in the classification task, and we apply BatchNorm1d before the PReLU activation function. 

\subsubsection{Baseline}

The baseline has the same architecture as the MoMLPs above. However, if we note $K$ the number of prototypes and assume that $\sqrt{K}$ is a natural number, the width of the baseline is $w \sqrt{K}$ so that the number of hidden parameters is the same as all the MoMLPs combined.

In our experiments, we set $K=4$, so $\sqrt{K}=2$.

\bibliographystyle{icml2024}
\bibliography{Bookkeeping/3_References,Bookkeeping/3_Referencesz_TBSorted}

\begin{thebibliography}{105}
\providecommand{\natexlab}[1]{#1}
\providecommand{\url}[1]{\texttt{#1}}
\expandafter\ifx\csname urlstyle\endcsname\relax
  \providecommand{\doi}[1]{doi: #1}\else
  \providecommand{\doi}{doi: \begingroup \urlstyle{rm}\Url}\fi

\bibitem[Acciaio et~al.(2023)Acciaio, Kratsios, and Pammer]{acciaio2023designing}
Acciaio, B., Kratsios, A., and Pammer, G.
\newblock Designing universal causal deep learning models: The geometric (hyper) transformer.
\newblock \emph{Mathematical Finance}, 2023.

\bibitem[Ackley(1987)]{ackley1987connectionist}
Ackley, D.
\newblock \emph{A Connectionist Machine for Genetic Hillclimbing}.
\newblock The Springer International Series in Engineering and Computer Science. Springer US, 1987.
\newblock ISBN 9780898382365.
\newblock URL \url{https://books.google.ca/books?id=p3hQAAAAMAAJ}.

\bibitem[Akbari et~al.(2024)Akbari, Kondratyuk, Cui, Hornung, Wang, and Adam]{akbari2024alternating}
Akbari, H., Kondratyuk, D., Cui, Y., Hornung, R., Wang, H., and Adam, H.
\newblock Alternating gradient descent and mixture-of-experts for integrated multimodal perception.
\newblock \emph{Advances in Neural Information Processing Systems}, 36, 2024.

\bibitem[Ali \& Nouy(2023)Ali and Nouy]{ali_CA_2023_TensorizedTreeNNs_ApproximationTheory}
Ali, M. and Nouy, A.
\newblock Approximation theory of tree tensor networks: tensorized univariate functions.
\newblock \emph{Constr. Approx.}, 58\penalty0 (2):\penalty0 463--544, 2023.
\newblock ISSN 0176-4276,1432-0940.
\newblock \doi{10.1007/s00365-023-09620-w}.
\newblock URL \url{https://doi.org/10.1007/s00365-023-09620-w}.

\bibitem[Bach(2017)]{bach2017breaking}
Bach, F.
\newblock Breaking the curse of dimensionality with convex neural networks.
\newblock \emph{Journal of Machine Learning Research}, 18\penalty0 (19):\penalty0 1--53, 2017.

\bibitem[Barham et~al.(2022)Barham, Chowdhery, Dean, Ghemawat, Hand, Hurt, Isard, Lim, Pang, Roy, et~al.]{barham2022pathways}
Barham, P., Chowdhery, A., Dean, J., Ghemawat, S., Hand, S., Hurt, D., Isard, M., Lim, H., Pang, R., Roy, S., et~al.
\newblock Pathways: Asynchronous distributed dataflow for ml.
\newblock \emph{Proceedings of Machine Learning and Systems}, 4:\penalty0 430--449, 2022.

\bibitem[Barron(1993)]{barron1993universal}
Barron, A.~R.
\newblock Universal approximation bounds for superpositions of a sigmoidal function.
\newblock \emph{IEEE Transactions on Information theory}, 39\penalty0 (3):\penalty0 930--945, 1993.

\bibitem[Bartlett et~al.(2019)Bartlett, Harvey, Liaw, and Mehrabian]{bartlett2019nearly}
Bartlett, P.~L., Harvey, N., Liaw, C., and Mehrabian, A.
\newblock Nearly-tight vc-dimension and pseudodimension bounds for piecewise linear neural networks.
\newblock \emph{The Journal of Machine Learning Research}, 20\penalty0 (1):\penalty0 2285--2301, 2019.

\bibitem[Benth et~al.(2023)Benth, Detering, and Galimberti]{benth2023neural}
Benth, F.~E., Detering, N., and Galimberti, L.
\newblock Neural networks in fr{\'e}chet spaces.
\newblock \emph{Annals of Mathematics and Artificial Intelligence}, 91\penalty0 (1):\penalty0 75--103, 2023.

\bibitem[Benyamini \& Lindenstrauss(2000)Benyamini and Lindenstrauss]{BenyaminiLindenstrauss_2000_GNFABook}
Benyamini, Y. and Lindenstrauss, J.
\newblock \emph{Geometric nonlinear functional analysis. {V}ol. 1}, volume~48 of \emph{American Mathematical Society Colloquium Publications}.
\newblock American Mathematical Society, Providence, RI, 2000.
\newblock ISBN 0-8218-0835-4.
\newblock \doi{10.1090/coll/048}.
\newblock URL \url{https://doi.org/10.1090/coll/048}.

\bibitem[Blumer et~al.(1989)Blumer, Ehrenfeucht, Haussler, and Warmuth]{blumer1989learnability}
Blumer, A., Ehrenfeucht, A., Haussler, D., and Warmuth, M.~K.
\newblock Learnability and the {V}apnik-{C}hervonenkis dimension.
\newblock \emph{J. Assoc. Comput. Mach.}, 36\penalty0 (4):\penalty0 929--965, 1989.
\newblock ISSN 0004-5411,1557-735X.
\newblock \doi{10.1145/76359.76371}.
\newblock URL \url{https://doi.org/10.1145/76359.76371}.

\bibitem[Bolcskei et~al.(2019)Bolcskei, Grohs, Kutyniok, and Petersen]{bolcskei2019optimal}
Bolcskei, H., Grohs, P., Kutyniok, G., and Petersen, P.
\newblock Optimal approximation with sparsely connected deep neural networks.
\newblock \emph{SIAM Journal on Mathematics of Data Science}, 1\penalty0 (1):\penalty0 8--45, 2019.

\bibitem[Bossard et~al.(2014)Bossard, Guillaumin, and Van~Gool]{bossard2014food}
Bossard, L., Guillaumin, M., and Van~Gool, L.
\newblock Food-101--mining discriminative components with random forests.
\newblock In \emph{Computer Vision--ECCV 2014: 13th European Conference, Zurich, Switzerland, September 6-12, 2014, Proceedings, Part VI 13}, pp.\  446--461. Springer, 2014.

\bibitem[Boyd \& Vandenberghe(2004)Boyd and Vandenberghe]{BoydVandenberghe_ConvexOptim_Book_2004}
Boyd, S. and Vandenberghe, L.
\newblock \emph{Convex optimization}.
\newblock Cambridge University Press, Cambridge, 2004.
\newblock ISBN 0-521-83378-7.
\newblock \doi{10.1017/CBO9780511804441}.
\newblock URL \url{https://doi.org/10.1017/CBO9780511804441}.

\bibitem[Brown et~al.(2020)Brown, Mann, Ryder, Subbiah, Kaplan, Dhariwal, Neelakantan, Shyam, Sastry, Askell, et~al.]{brown2020language}
Brown, T., Mann, B., Ryder, N., Subbiah, M., Kaplan, J.~D., Dhariwal, P., Neelakantan, A., Shyam, P., Sastry, G., Askell, A., et~al.
\newblock Language models are few-shot learners.
\newblock \emph{Advances in neural information processing systems}, 33:\penalty0 1877--1901, 2020.

\bibitem[Brukhim et~al.(2022)Brukhim, Carmon, Dinur, Moran, and Yehudayoff]{brukhim2022characterization}
Brukhim, N., Carmon, D., Dinur, I., Moran, S., and Yehudayoff, A.
\newblock A characterization of multiclass learnability.
\newblock In \emph{2022 IEEE 63rd Annual Symposium on Foundations of Computer Science (FOCS)}, pp.\  943--955. IEEE, 2022.

\bibitem[Cao et~al.(2020)Cao, Law, and Fidler]{Cao2020A}
Cao, T., Law, M.~T., and Fidler, S.
\newblock A theoretical analysis of the number of shots in few-shot learning.
\newblock In \emph{International Conference on Learning Representations}, 2020.
\newblock URL \url{https://openreview.net/forum?id=HkgB2TNYPS}.

\bibitem[Chen(1995)]{chen1995optimal}
Chen, J.
\newblock Optimal rate of convergence for finite mixture models.
\newblock \emph{The Annals of Statistics}, pp.\  221--233, 1995.

\bibitem[Cheridito et~al.(2021)Cheridito, Jentzen, and Rossmannek]{FlorianHighDimensional2021}
Cheridito, P., Jentzen, A., and Rossmannek, F.
\newblock Efficient approximation of high-dimensional functions with neural networks.
\newblock \emph{IEEE Transactions on Neural Networks and Learning Systems}, 2021.

\bibitem[Conant(2023)]{UpperBoundVCPartitioning_MO}
Conant, G.
\newblock Upper bound on vc-dimension of partitioned class.
\newblock MathOverflow, 2023.
\newblock URL \url{https://mathoverflow.net/q/461604}.
\newblock URL:https://mathoverflow.net/q/461604 (version: 2024-01-05).

\bibitem[Cuchiero et~al.(2023)Cuchiero, Schmocker, and Teichmann]{cuchiero2023global}
Cuchiero, C., Schmocker, P., and Teichmann, J.
\newblock Global universal approximation of functional input maps on weighted spaces.
\newblock \emph{arXiv preprint arXiv:2306.03303}, 2023.

\bibitem[Cybenko(1989)]{cybenko1989approximation}
Cybenko, G.
\newblock Approximation by superpositions of a sigmoidal function.
\newblock \emph{Mathematics of control, signals and systems}, 2\penalty0 (4):\penalty0 303--314, 1989.

\bibitem[Daubechies et~al.(2022)Daubechies, DeVore, Foucart, Hanin, and Petrova]{daubechies2022nonlinear}
Daubechies, I., DeVore, R., Foucart, S., Hanin, B., and Petrova, G.
\newblock Nonlinear approximation and (deep) relu networks.
\newblock \emph{Constructive Approximation}, 55\penalty0 (1):\penalty0 127--172, 2022.

\bibitem[De~Ryck et~al.(2021)De~Ryck, Lanthaler, and Mishra]{de2021approximation}
De~Ryck, T., Lanthaler, S., and Mishra, S.
\newblock On the approximation of functions by tanh neural networks.
\newblock \emph{Neural Networks}, 143:\penalty0 732--750, 2021.

\bibitem[DeVore et~al.(2021)DeVore, Hanin, and Petrova]{devore2021neural}
DeVore, R., Hanin, B., and Petrova, G.
\newblock Neural network approximation.
\newblock \emph{Acta Numerica}, 30:\penalty0 327--444, 2021.

\bibitem[Fang et~al.(2023)Fang, Ouyang, Zhou, and Cheng]{fang2023attention}
Fang, Z., Ouyang, Y., Zhou, D.-X., and Cheng, G.
\newblock Attention enables zero approximation error, 2023.
\newblock URL \url{https://openreview.net/forum?id=AV_bv4Ydcr9}.

\bibitem[Federer(1978)]{Federer_GeometricMeasureTheory_1978}
Federer, H.
\newblock Colloquium lectures on geometric measure theory.
\newblock \emph{Bull. Amer. Math. Soc.}, 1978.

\bibitem[Fedus et~al.(2022)Fedus, Zoph, and Shazeer]{fedus2022switch}
Fedus, W., Zoph, B., and Shazeer, N.
\newblock Switch transformers: Scaling to trillion parameter models with simple and efficient sparsity.
\newblock \emph{Journal of Machine Learning Research}, 23\penalty0 (120):\penalty0 1--39, 2022.

\bibitem[Galimberti et~al.(2022)Galimberti, Kratsios, and Livieri]{galimberti2022designing}
Galimberti, L., Kratsios, A., and Livieri, G.
\newblock Designing universal causal deep learning models: The case of infinite-dimensional dynamical systems from stochastic analysis.
\newblock \emph{arXiv preprint arXiv:2210.13300}, 2022.

\bibitem[Ghadimi~Atigh et~al.(2021)Ghadimi~Atigh, Keller-Ressel, and Mettes]{ghadimi2021hyperbolic}
Ghadimi~Atigh, M., Keller-Ressel, M., and Mettes, P.
\newblock Hyperbolic busemann learning with ideal prototypes.
\newblock \emph{Advances in Neural Information Processing Systems}, 34:\penalty0 103--115, 2021.

\bibitem[Gonon \& Ortega(2021)Gonon and Ortega]{gonon2021fading}
Gonon, L. and Ortega, J.-P.
\newblock Fading memory echo state networks are universal.
\newblock \emph{Neural Networks}, 138:\penalty0 10--13, 2021.

\bibitem[Gonon et~al.(2023)Gonon, Grigoryeva, and Ortega]{gonon2023approximation}
Gonon, L., Grigoryeva, L., and Ortega, J.-P.
\newblock Approximation bounds for random neural networks and reservoir systems.
\newblock \emph{The Annals of Applied Probability}, 33\penalty0 (1):\penalty0 28--69, 2023.

\bibitem[Google(2023)]{Gemini}
Google.
\newblock Gemini.
\newblock Google, 2023.
\newblock URL \url{https://gemini.google.com/}.

\bibitem[Grigoryeva \& Ortega(2018)Grigoryeva and Ortega]{grigoryeva2018universal}
Grigoryeva, L. and Ortega, J.-P.
\newblock Universal discrete-time reservoir computers with stochastic inputs and linear readouts using non-homogeneous state-affine systems.
\newblock \emph{Journal of Machine Learning Research}, 19\penalty0 (24):\penalty0 1--40, 2018.

\bibitem[G{\"u}hring \& Raslan(2021)G{\"u}hring and Raslan]{guhring2021approximation}
G{\"u}hring, I. and Raslan, M.
\newblock Approximation rates for neural networks with encodable weights in smoothness spaces.
\newblock \emph{Neural Networks}, 134:\penalty0 107--130, 2021.

\bibitem[G{\"u}hring et~al.(2020)G{\"u}hring, Kutyniok, and Petersen]{guhring2020error}
G{\"u}hring, I., Kutyniok, G., and Petersen, P.
\newblock Error bounds for approximations with deep relu neural networks in w s, p norms.
\newblock \emph{Analysis and Applications}, 18\penalty0 (05):\penalty0 803--859, 2020.

\bibitem[Guu et~al.(2020)Guu, Lee, Tung, Pasupat, and Chang]{guu2020retrieval}
Guu, K., Lee, K., Tung, Z., Pasupat, P., and Chang, M.
\newblock Retrieval augmented language model pre-training.
\newblock In \emph{International conference on machine learning}, pp.\  3929--3938. PMLR, 2020.

\bibitem[Hanneke(2016)]{Hanneke_2016_JMLR__CharacterizationLearnability}
Hanneke, S.
\newblock The optimal sample complexity of {PAC} learning.
\newblock \emph{J. Mach. Learn. Res.}, 17:\penalty0 Paper No. 38, 15, 2016.
\newblock ISSN 1532-4435,1533-7928.

\bibitem[Harvey et~al.(2017)Harvey, Liaw, and Mehrabian]{harvey2017nearly}
Harvey, N., Liaw, C., and Mehrabian, A.
\newblock Nearly-tight vc-dimension bounds for piecewise linear neural networks.
\newblock In \emph{Conference on learning theory}, pp.\  1064--1068. PMLR, 2017.

\bibitem[Heinonen(2001)]{heinonen2001lectures}
Heinonen, J.
\newblock \emph{Lectures on analysis on metric spaces}.
\newblock Universitext. Springer-Verlag, New York, 2001.

\bibitem[Ho et~al.(2022)Ho, Yang, and Jordan]{ho2022convergence}
Ho, N., Yang, C.-Y., and Jordan, M.~I.
\newblock Convergence rates for gaussian mixtures of experts.
\newblock \emph{Journal of Machine Learning Research}, 23\penalty0 (323):\penalty0 1--81, 2022.

\bibitem[hoon Song et~al.(2023)hoon Song, Hwang, ho~Lee, and Kang]{hoon2023minimal}
hoon Song, C., Hwang, G., ho~Lee, J., and Kang, M.
\newblock Minimal width for universal property of deep rnn.
\newblock \emph{Journal of Machine Learning Research}, 24\penalty0 (121):\penalty0 1--41, 2023.

\bibitem[Hornik et~al.(1989)Hornik, Stinchcombe, and White]{hornik1989multilayer}
Hornik, K., Stinchcombe, M., and White, H.
\newblock Multilayer feedforward networks are universal approximators.
\newblock \emph{Neural networks}, 2\penalty0 (5):\penalty0 359--366, 1989.

\bibitem[Hutter et~al.(2022)Hutter, G{\"u}l, and B{\"o}lcskei]{hutter2022metric}
Hutter, C., G{\"u}l, R., and B{\"o}lcskei, H.
\newblock Metric entropy limits on recurrent neural network learning of linear dynamical systems.
\newblock \emph{Applied and Computational Harmonic Analysis}, 59:\penalty0 198--223, 2022.

\bibitem[Jacobs et~al.(1991)Jacobs, Jordan, Nowlan, and Hinton]{jacobs1991adaptive}
Jacobs, R.~A., Jordan, M.~I., Nowlan, S.~J., and Hinton, G.~E.
\newblock Adaptive mixtures of local experts.
\newblock \emph{Neural computation}, 3\penalty0 (1):\penalty0 79--87, 1991.

\bibitem[Jiang et~al.(2024)Jiang, Sablayrolles, Roux, Mensch, Savary, Bamford, Chaplot, Casas, Hanna, Bressand, et~al.]{jiang2024mixtral}
Jiang, A.~Q., Sablayrolles, A., Roux, A., Mensch, A., Savary, B., Bamford, C., Chaplot, D.~S., Casas, D. d.~l., Hanna, E.~B., Bressand, F., et~al.
\newblock Mixtral of experts.
\newblock \emph{arXiv preprint arXiv:2401.04088}, 2024.

\bibitem[Jiao et~al.(2023)Jiao, Lai, Lu, Wang, Yang, and Yang]{JiaoEtal_SIAMANalysisSINEXPBreakCOD_2023}
Jiao, Y., Lai, Y., Lu, X., Wang, F., Yang, J.~Z., and Yang, Y.
\newblock Deep neural networks with {R}e{LU}-sine-exponential activations break curse of dimensionality in approximation on {H}\"{o}lder class.
\newblock \emph{SIAM J. Math. Anal.}, 55\penalty0 (4):\penalty0 3635--3649, 2023.
\newblock ISSN 0036-1410,1095-7154.
\newblock \doi{10.1137/21M144431X}.
\newblock URL \url{https://doi.org/10.1137/21M144431X}.

\bibitem[Jordan \& Xu(1995)Jordan and Xu]{jordan1995convergence}
Jordan, M.~I. and Xu, L.
\newblock Convergence results for the em approach to mixtures of experts architectures.
\newblock \emph{Neural networks}, 8\penalty0 (9):\penalty0 1409--1431, 1995.

\bibitem[Jung(1901)]{Jung1901}
Jung, H.
\newblock Ueber die kleinste kugel, die eine räumliche figur einschliesst.
\newblock \emph{Journal für die reine und angewandte Mathematik}, 123:\penalty0 241--257, 1901.
\newblock URL \url{http://eudml.org/doc/149122}.

\bibitem[Keller-Ressel \& Nargang(2023)Keller-Ressel and Nargang]{keller2023strain}
Keller-Ressel, M. and Nargang, S.
\newblock Strain-minimizing hyperbolic network embeddings with landmarks.
\newblock \emph{Journal of Complex Networks}, 11\penalty0 (1):\penalty0 cnad002, 2023.

\bibitem[Kingma \& Ba(2014)Kingma and Ba]{kingma2014adam}
Kingma, D.~P. and Ba, J.
\newblock Adam: A method for stochastic optimization.
\newblock \emph{arXiv preprint arXiv:1412.6980}, 2014.

\bibitem[Kratsios \& Papon(2022)Kratsios and Papon]{kratsios2022universal}
Kratsios, A. and Papon, L.
\newblock Universal approximation theorems for differentiable geometric deep learning.
\newblock \emph{The Journal of Machine Learning Research}, 23\penalty0 (1):\penalty0 8896--8968, 2022.

\bibitem[Kratsios \& Zamanlooy(2022)Kratsios and Zamanlooy]{kratsios2022relu}
Kratsios, A. and Zamanlooy, B.
\newblock Do relu networks have an edge when approximating compactly-supported functions?
\newblock \emph{Transactions on Machine Learning Research}, 2022.

\bibitem[Krauthgamer et~al.(2005)Krauthgamer, Lee, Mendel, and Naor]{KrauthgamerLeeMendelNaor_2005_GAFA__MEasuredDescentEmbedding}
Krauthgamer, R., Lee, J.~R., Mendel, M., and Naor, A.
\newblock Measured descent: a new embedding method for finite metrics.
\newblock \emph{Geom. Funct. Anal.}, 15\penalty0 (4):\penalty0 839--858, 2005.
\newblock ISSN 1016-443X,1420-8970.
\newblock \doi{10.1007/s00039-005-0527-6}.
\newblock URL \url{https://doi.org/10.1007/s00039-005-0527-6}.

\bibitem[Krizhevsky \& Hinton(2010)Krizhevsky and Hinton]{krizhevsky2010convolutional}
Krizhevsky, A. and Hinton, G.
\newblock Convolutional deep belief networks on cifar-10.
\newblock \emph{Unpublished manuscript}, 40\penalty0 (7):\penalty0 1--9, 2010.

\bibitem[Larochelle et~al.(2009)Larochelle, Bengio, Louradour, and Lamblin]{larochelle2009exploring}
Larochelle, H., Bengio, Y., Louradour, J., and Lamblin, P.
\newblock Exploring strategies for training deep neural networks.
\newblock \emph{Journal of machine learning research}, 10\penalty0 (1), 2009.

\bibitem[Law et~al.(2019)Law, Snell, massoud Farahmand, Urtasun, and Zemel]{law2018dimensionality}
Law, M.~T., Snell, J., massoud Farahmand, A., Urtasun, R., and Zemel, R.~S.
\newblock Dimensionality reduction for representing the knowledge of probabilistic models.
\newblock In \emph{International Conference on Learning Representations}, 2019.

\bibitem[Lepikhin et~al.(2021)Lepikhin, Lee, Xu, Chen, Firat, Huang, Krikun, Shazeer, and Chen]{lepikhin2021gshard}
Lepikhin, D., Lee, H., Xu, Y., Chen, D., Firat, O., Huang, Y., Krikun, M., Shazeer, N., and Chen, Z.
\newblock {\{}GS{\}}hard: Scaling giant models with conditional computation and automatic sharding.
\newblock In \emph{International Conference on Learning Representations}, 2021.
\newblock URL \url{https://openreview.net/forum?id=qrwe7XHTmYb}.

\bibitem[Loshchilov \& Hutter(2019)Loshchilov and Hutter]{loshchilov2018decoupled}
Loshchilov, I. and Hutter, F.
\newblock Decoupled weight decay regularization.
\newblock In \emph{International Conference on Learning Representations}, 2019.
\newblock URL \url{https://openreview.net/forum?id=Bkg6RiCqY7}.

\bibitem[Lu et~al.(2021{\natexlab{a}})Lu, Shen, Yang, and Zhang]{LuShenYangZhang_SIAm_2021_OptimalApproxSmooth}
Lu, J., Shen, Z., Yang, H., and Zhang, S.
\newblock Deep network approximation for smooth functions.
\newblock \emph{SIAM J. Math. Anal.}, 53\penalty0 (5):\penalty0 5465--5506, 2021{\natexlab{a}}.
\newblock ISSN 0036-1410,1095-7154.
\newblock \doi{10.1137/20M134695X}.
\newblock URL \url{https://doi.org/10.1137/20M134695X}.

\bibitem[Lu et~al.(2021{\natexlab{b}})Lu, Shen, Yang, and Zhang]{LuShenZuoweiHaizhao_2021_DNNApprx_SmoothFunctions}
Lu, J., Shen, Z., Yang, H., and Zhang, S.
\newblock Deep network approximation for smooth functions.
\newblock \emph{SIAM J. Math. Anal.}, 53\penalty0 (5):\penalty0 5465--5506, 2021{\natexlab{b}}.
\newblock ISSN 0036-1410,1095-7154.
\newblock \doi{10.1137/20M134695X}.
\newblock URL \url{https://doi.org/10.1137/20M134695X}.

\bibitem[Lu et~al.(2017)Lu, Pu, Wang, Hu, and Wang]{lu2017expressive}
Lu, Z., Pu, H., Wang, F., Hu, Z., and Wang, L.
\newblock The expressive power of neural networks: A view from the width.
\newblock \emph{Advances in neural information processing systems}, 30, 2017.

\bibitem[Majid \& Tudisco(2024)Majid and Tudisco]{majid2024mixture}
Majid, H.~A. and Tudisco, F.
\newblock Mixture of neural operators: Incorporating historical information for longer rollouts.
\newblock In \emph{ICLR 2024 Workshop on AI4DifferentialEquations In Science}, 2024.

\bibitem[Mao \& Zhou(2023)Mao and Zhou]{mao2023rates}
Mao, T. and Zhou, D.-X.
\newblock Rates of approximation by relu shallow neural networks.
\newblock \emph{Journal of Complexity}, 79:\penalty0 101784, 2023.

\bibitem[Meila \& Jordan(2000)Meila and Jordan]{meila2000learning}
Meila, M. and Jordan, M.~I.
\newblock Learning with mixtures of trees.
\newblock \emph{Journal of Machine Learning Research}, 1\penalty0 (Oct):\penalty0 1--48, 2000.

\bibitem[Mensink et~al.(2012)Mensink, Verbeek, Perronnin, and Csurka]{mensink2012metric}
Mensink, T., Verbeek, J., Perronnin, F., and Csurka, G.
\newblock Metric learning for large scale image classification: Generalizing to new classes at near-zero cost.
\newblock In \emph{Computer Vision--ECCV 2012: 12th European Conference on Computer Vision, Florence, Italy, October 7-13, 2012, Proceedings, Part II 12}, pp.\  488--501. Springer, 2012.

\bibitem[Mhaskar et~al.(2017)Mhaskar, Liao, and Poggio]{mhaskar2017and}
Mhaskar, H., Liao, Q., and Poggio, T.
\newblock When and why are deep networks better than shallow ones?
\newblock In \emph{Proceedings of the AAAI conference on artificial intelligence}, volume~31, 2017.

\bibitem[Mhaskar(1996)]{mhaskar1996neural}
Mhaskar, H.~N.
\newblock Neural networks for optimal approximation of smooth and analytic functions.
\newblock \emph{Neural computation}, 8\penalty0 (1):\penalty0 164--177, 1996.

\bibitem[Neuman \& Petersen(2024)Neuman and Petersen]{MartinaSpiking2024}
Neuman, A.~M. and Petersen, P.~C.
\newblock Efficient learning using spiking neural networks equipped with affine encoders and decoders, 2024.

\bibitem[Opschoor et~al.(2022)Opschoor, Schwab, and Zech]{opschoor2022exponential}
Opschoor, J.~A., Schwab, C., and Zech, J.
\newblock Exponential relu dnn expression of holomorphic maps in high dimension.
\newblock \emph{Constructive Approximation}, 55\penalty0 (1):\penalty0 537--582, 2022.

\bibitem[Oquab et~al.(2023)Oquab, Darcet, Moutakanni, Vo, Szafraniec, Khalidov, Fernandez, Haziza, Massa, El-Nouby, et~al.]{oquab2023dinov2}
Oquab, M., Darcet, T., Moutakanni, T., Vo, H., Szafraniec, M., Khalidov, V., Fernandez, P., Haziza, D., Massa, F., El-Nouby, A., et~al.
\newblock Dinov2: Learning robust visual features without supervision.
\newblock \emph{arXiv preprint arXiv:2304.07193}, 2023.

\bibitem[Petersen \& Voigtlaender(2020)Petersen and Voigtlaender]{petersen2020equivalence}
Petersen, P. and Voigtlaender, F.
\newblock Equivalence of approximation by convolutional neural networks and fully-connected networks.
\newblock \emph{Proceedings of the American Mathematical Society}, 148\penalty0 (4):\penalty0 1567--1581, 2020.

\bibitem[Petrova \& Wojtaszczyk(2023)Petrova and Wojtaszczyk]{PetrovaWojtaszczyk_2023_LipWidths_CA}
Petrova, G. and Wojtaszczyk, P.~a.
\newblock Lipschitz widths.
\newblock \emph{Constr. Approx.}, 57\penalty0 (2):\penalty0 759--805, 2023.
\newblock ISSN 0176-4276,1432-0940.
\newblock \doi{10.1007/s00365-022-09576-3}.
\newblock URL \url{https://doi.org/10.1007/s00365-022-09576-3}.

\bibitem[Pollard(1990)]{PollardEmpiricalProcesses_Book_1990}
Pollard, D.
\newblock \emph{Empirical processes: theory and applications}, volume~2 of \emph{NSF-CBMS Regional Conference Series in Probability and Statistics}.
\newblock Institute of Mathematical Statistics, Hayward, CA; American Statistical Association, Alexandria, VA, 1990.
\newblock ISBN 0-940600-16-1.

\bibitem[Puigcerver et~al.(2022)Puigcerver, Jenatton, Riquelme, Awasthi, and Bhojanapalli]{puigcerver2022adversarial}
Puigcerver, J., Jenatton, R., Riquelme, C., Awasthi, P., and Bhojanapalli, S.
\newblock On the adversarial robustness of mixture of experts.
\newblock \emph{Advances in Neural Information Processing Systems}, 35:\penalty0 9660--9671, 2022.

\bibitem[Radford et~al.(2018)Radford, Narasimhan, Salimans, Sutskever, et~al.]{radford2018improving}
Radford, A., Narasimhan, K., Salimans, T., Sutskever, I., et~al.
\newblock Improving language understanding by generative pre-training, 2018.

\bibitem[Rastrigin(1974)]{rastrigin1974systems}
Rastrigin, L.~A.
\newblock Systems of extremal control.
\newblock \emph{Nauka}, 1974.

\bibitem[Robinson(2011)]{RobinsonAttactorsDimensionEmbeddings_2011_Book}
Robinson, J.~C.
\newblock \emph{Dimensions, embeddings, and attractors}, volume 186 of \emph{Cambridge Tracts in Mathematics}.
\newblock Cambridge University Press, Cambridge, 2011.
\newblock ISBN 978-0-521-89805-8.

\bibitem[Shalev-Shwartz \& Ben-David(2014)Shalev-Shwartz and Ben-David]{shalev2014understanding}
Shalev-Shwartz, S. and Ben-David, S.
\newblock \emph{Understanding machine learning: From theory to algorithms}.
\newblock Cambridge university press, 2014.

\bibitem[Shazeer et~al.(2017)Shazeer, Mirhoseini, Maziarz, Davis, Le, Hinton, and Dean]{shazeer2017outrageously}
Shazeer, N., Mirhoseini, A., Maziarz, K., Davis, A., Le, Q., Hinton, G., and Dean, J.
\newblock Outrageously large neural networks: The sparsely-gated mixture-of-experts layer.
\newblock \emph{arXiv preprint arXiv:1701.06538}, 2017.

\bibitem[Shen et~al.(2021)Shen, Yang, and Zhang]{shen2021deep}
Shen, Z., Yang, H., and Zhang, S.
\newblock Deep network with approximation error being reciprocal of width to power of square root of depth.
\newblock \emph{Neural Computation}, 33\penalty0 (4):\penalty0 1005--1036, 2021.

\bibitem[Shen et~al.(2022{\natexlab{a}})Shen, Yang, and Zhang]{shen2022deep_JMLR}
Shen, Z., Yang, H., and Zhang, S.
\newblock Deep network approximation: Achieving arbitrary accuracy with fixed number of neurons.
\newblock \emph{The Journal of Machine Learning Research}, 23\penalty0 (1):\penalty0 12653--12712, 2022{\natexlab{a}}.

\bibitem[Shen et~al.(2022{\natexlab{b}})Shen, Yang, and Zhang]{shen2022optimal}
Shen, Z., Yang, H., and Zhang, S.
\newblock Optimal approximation rate of relu networks in terms of width and depth.
\newblock \emph{Journal de Math{\'e}matiques Pures et Appliqu{\'e}es}, 157:\penalty0 101--135, 2022{\natexlab{b}}.

\bibitem[Snell et~al.(2017)Snell, Swersky, and Zemel]{snell2017prototypical}
Snell, J., Swersky, K., and Zemel, R.
\newblock Prototypical networks for few-shot learning.
\newblock \emph{Advances in neural information processing systems}, 30, 2017.

\bibitem[Srivastava et~al.(2022)Srivastava, Kawaguchi, and Rajan]{srivastava2022expertnet}
Srivastava, S., Kawaguchi, K., and Rajan, V.
\newblock Expertnet: A symbiosis of classification and clustering, 2022.

\bibitem[Suzuki(2019)]{suzuki2018adaptivity}
Suzuki, T.
\newblock Adaptivity of deep re{LU} network for learning in besov and mixed smooth besov spaces: optimal rate and curse of dimensionality.
\newblock In \emph{International Conference on Learning Representations}, 2019.
\newblock URL \url{https://openreview.net/forum?id=H1ebTsActm}.

\bibitem[Tabuada \& Gharesifard(2021)Tabuada and Gharesifard]{tabuada2021universal}
Tabuada, P. and Gharesifard, B.
\newblock Universal approximation power of deep residual neural networks via nonlinear control theory.
\newblock In \emph{International Conference on Learning Representations}, 2021.
\newblock URL \url{https://openreview.net/forum?id=-IXhmY16R3M}.

\bibitem[Teicher(1960)]{teicher1960mixture}
Teicher, H.
\newblock On the mixture of distributions.
\newblock \emph{The Annals of Mathematical Statistics}, 31\penalty0 (1):\penalty0 55--73, 1960.

\bibitem[Teicher(1963)]{teicher1963identifiability}
Teicher, H.
\newblock Identifiability of finite mixtures.
\newblock \emph{The annals of Mathematical statistics}, pp.\  1265--1269, 1963.

\bibitem[Trask et~al.(2022)Trask, Henriksen, Martinez, and Cyr]{pmlr-v190-trask22a}
Trask, N., Henriksen, A., Martinez, C., and Cyr, E.
\newblock Hierarchical partition of unity networks: fast multilevel training.
\newblock In Dong, B., Li, Q., Wang, L., and Xu, Z.-Q.~J. (eds.), \emph{Proceedings of Mathematical and Scientific Machine Learning}, volume 190 of \emph{Proceedings of Machine Learning Research}, pp.\  271--286. PMLR, 15--17 Aug 2022.
\newblock URL \url{https://proceedings.mlr.press/v190/trask22a.html}.

\bibitem[Voigtlaender(2023)]{voigtlaender2023universal}
Voigtlaender, F.
\newblock The universal approximation theorem for complex-valued neural networks.
\newblock \emph{Applied and Computational Harmonic Analysis}, 64:\penalty0 33--61, 2023.

\bibitem[Voigtlaender \& Petersen(2019)Voigtlaender and Petersen]{voigtlaender2019approximation}
Voigtlaender, F. and Petersen, P.
\newblock Approximation in l p ($\mu$) with deep relu neural networks.
\newblock In \emph{2019 13th International conference on Sampling Theory and Applications (SampTA)}, pp.\  1--4. IEEE, 2019.

\bibitem[Voronoi(1908)]{Voronoi1908198287}
Voronoi, G.
\newblock Nouvelles applications des param{\`e}tres continus {\`a} la th{\'e}orie des formes quadratiques. deuxi{\`e}me m{\'e}moire. recherches sur les parall{\'e}llo{\`e}dres primitifs.
\newblock \emph{Journal f{\"u}r die reine und angewandte Mathematik (Crelles Journal)}, 1908\penalty0 (134):\penalty0 198--287, 1908.

\bibitem[Wang et~al.(1996)Wang, Puterman, Cockburn, and Le]{wang1996mixed}
Wang, P., Puterman, M.~L., Cockburn, I., and Le, N.
\newblock Mixed poisson regression models with covariate dependent rates.
\newblock \emph{Biometrics}, pp.\  381--400, 1996.

\bibitem[{Wang} et~al.(2017){Wang}, {Yu}, {Dou}, {Darrell}, and {Gonzalez}]{2017arXiv171109485W}
{Wang}, X., {Yu}, F., {Dou}, Z.-Y., {Darrell}, T., and {Gonzalez}, J.~E.
\newblock {SkipNet: Learning Dynamic Routing in Convolutional Networks}.
\newblock \emph{CVPR}, 2017.

\bibitem[Yang \& Zhou(2024)Yang and Zhou]{yang2024optimal}
Yang, Y. and Zhou, D.-X.
\newblock Optimal rates of approximation by shallow reluk neural networks and applications to nonparametric regression.
\newblock \emph{Constructive Approximation}, pp.\  1--32, 2024.

\bibitem[Yarotsky(2017)]{yarotsky2017error}
Yarotsky, D.
\newblock Error bounds for approximations with deep relu networks.
\newblock \emph{Neural Networks}, 94:\penalty0 103--114, 2017.

\bibitem[Yarotsky(2018)]{yarotsky2018optimal}
Yarotsky, D.
\newblock Optimal approximation of continuous functions by very deep relu networks.
\newblock In Bubeck, S., Perchet, V., and Rigollet, P. (eds.), \emph{Proceedings of the 31st Conference On Learning Theory}, volume~75 of \emph{Proceedings of Machine Learning Research}, pp.\  639--649. PMLR, 06--09 Jul 2018.
\newblock URL \url{https://proceedings.mlr.press/v75/yarotsky18a.html}.

\bibitem[Yarotsky(2021)]{yarotsky2021elementary}
Yarotsky, D.
\newblock Elementary superexpressive activations.
\newblock In \emph{International Conference on Machine Learning}, pp.\  11932--11940. PMLR, 2021.

\bibitem[Yarotsky(2022)]{yarotsky2022universal}
Yarotsky, D.
\newblock Universal approximations of invariant maps by neural networks.
\newblock \emph{Constructive Approximation}, 55\penalty0 (1):\penalty0 407--474, 2022.

\bibitem[Yarotsky \& Zhevnerchuk(2020)Yarotsky and Zhevnerchuk]{yarotsky2020phase}
Yarotsky, D. and Zhevnerchuk, A.
\newblock The phase diagram of approximation rates for deep neural networks.
\newblock \emph{Advances in neural information processing systems}, 33:\penalty0 13005--13015, 2020.

\bibitem[Yun et~al.(2019)Yun, Bhojanapalli, Rawat, Reddi, and Kumar]{yun2019transformers}
Yun, C., Bhojanapalli, S., Rawat, A.~S., Reddi, S.~J., and Kumar, S.
\newblock Are transformers universal approximators of sequence-to-sequence functions?
\newblock \emph{arXiv preprint arXiv:1912.10077}, 2019.

\bibitem[Yun et~al.(2020)Yun, Chang, Bhojanapalli, Rawat, Reddi, and Kumar]{yun2020n}
Yun, C., Chang, Y.-W., Bhojanapalli, S., Rawat, A.~S., Reddi, S., and Kumar, S.
\newblock O (n) connections are expressive enough: Universal approximability of sparse transformers.
\newblock \emph{Advances in Neural Information Processing Systems}, 33:\penalty0 13783--13794, 2020.

\bibitem[Zamanlooy \& Kratsios(2022)Zamanlooy and Kratsios]{zamanlooy2022learning}
Zamanlooy, B. and Kratsios, A.
\newblock Learning sub-patterns in piecewise continuous functions.
\newblock \emph{Neurocomputing}, 480:\penalty0 192--211, 2022.

\bibitem[Zhang et~al.(2022)Zhang, Shen, and Yang]{zhang2022deep}
Zhang, S., Shen, Z., and Yang, H.
\newblock Deep network approximation: Achieving arbitrary accuracy with fixed number of neurons.
\newblock \emph{Journal of Machine Learning Research}, 23\penalty0 (276):\penalty0 1--60, 2022.

\end{thebibliography}

\end{document}